\definecolor{darkblue}{rgb}{0, 0, 0.5}
\definecolor{lightblue}{RGB}{220,235,250}
\newcommand{\ifno}[1]{}
\newcommand{\one}{\mathbf{1}}
\newcommand{\E}{\mathbb{E}}
\newcommand{\Dc}{\mathcal{D}}
\newcommand{\Lc}{\mathcal{L}}
\newcommand{\cL}{\mathcal{L}}
\newcommand{\be}{\begin{equation}}
\newcommand{\ee}{\end{equation}}
\newcommand{\ba}{\begin{array}}
\newcommand{\ea}{\end{array}}
\newcommand{\bad}{\begin{aligned}}
\newcommand{\ead}{\end{aligned}}
\tikzset{
    block/.style = {draw, rectangle, 
        minimum height=1cm, 
        minimum width=2cm},
    input/.style = {coordinate,node distance=1cm},
    output/.style = {coordinate,node distance=3cm},
    arrow/.style={draw, -latex,node distance=2cm},
    pinstyle/.style = {pin edge={latex-, black,node distance=2cm}},
    sum/.style = {coordinate, node distance=1cm}
}
\newcommand{\method}{\textcolor{black}{Hybrid Post-Training}\xspace}
\newcommand{\framework}{\textcolor{black}{Unified Policy Gradient Estimator}\xspace}
\newcommand{\mot}{\textcolor{black}{HPT}\xspace}
\newcommand{\gpe}{\textcolor{black}{UPGE}\xspace}
\definecolor{darkred}{RGB}{200, 0, 0}
\NewDocumentCommand{\yuxin}{ mO{} }{\textcolor{red}{\textsuperscript{yuxin}\textbf{\small[#1]}}}
\NewDocumentCommand{\youbang}{ mO{} }{\textcolor{magenta}{\textsuperscript{youbang}\textbf{\small[#1]}}}
\NewDocumentCommand{\xingtai}{ mO{} }{\textcolor{RoyalPurple}{\textsuperscript{xingtai}\textbf{\small[#1]}}}
\colorlet{moss}{green!40!black}
\definecolor{titlepurple}{HTML}{9673A6}
\title{{\fontsize{14pt}{15pt}\selectfont Towards a Unified View of Large Language Model Post-Training}}
\author{Xingtai Lv$^{1*}$, Yuxin Zuo$^{1*}$, Youbang Sun$^{1\dag}$, Hongyi Liu$^{1}$, Yuntian Wei$^{1}$, \\ 
\textbf{Zhekai Chen$^{1}$, Xuekai Zhu$^{1}$, Kaiyan Zhang$^{1}$, Bingning Wang$^{3}$}, \\
\textbf{Ning Ding$^{1,2\dag}$, Bowen Zhou$^{1,2\dag}$} \\
$^{1}$Tsinghua University, $^{2}$Shanghai AI Laboratory, $^{3}$WeChat AI\\
\\
\faGithub~\textbf{Code:}~\href{https://github.com/TsinghuaC3I/Unify-Post-Training}{TsinghuaC3I/Unify-Post-Training}
\\[0.5em]
\faEnvelope~\textbf{Mail:}~\text{lvxt24@mails.tsinghua.edu.cn}
}
\begin{document}

\ifcolmsubmission
\linenumbers
\fi

\maketitle

\renewcommand\thefootnote{}\footnote{$^{*}$ Equal Contributions. $^{\dag}$ Corresponding Authors.}

\begin{abstract}

Two major sources of training data exist for post-training modern language models: online~(model-generated rollouts) data, and offline~(human or other-model demonstrations) data. 
These two types of data are typically used by approaches like Reinforcement Learning (RL) and Supervised Fine-Tuning (SFT), respectively.
In this paper, we show that these approaches are not in contradiction, but are instances of a single optimization process.
We derive a \framework, and present the calculations of a wide spectrum of post-training approaches as the gradient of a common objective under different data distribution assumptions and various bias-variance tradeoffs. The gradient estimator is constructed with four interchangeable parts: stabilization mask, reference policy denominator, advantage estimate, and likelihood gradient. 
Motivated by our theoretical findings, we propose \method~(HPT), an algorithm that dynamically selects different training signals.
\mot is designed to yield both effective exploitation of demonstration and stable exploration without sacrificing learned reasoning patterns.
We provide extensive experiments and ablation studies to verify the effectiveness of our unified theoretical framework and \mot.
Across six mathematical reasoning benchmarks and two out-of-distribution suites, \mot consistently surpasses strong baselines across models of varying scales and families.

\end{abstract}

\begin{figure}[h]
    \centering
    \includegraphics[width=\linewidth]{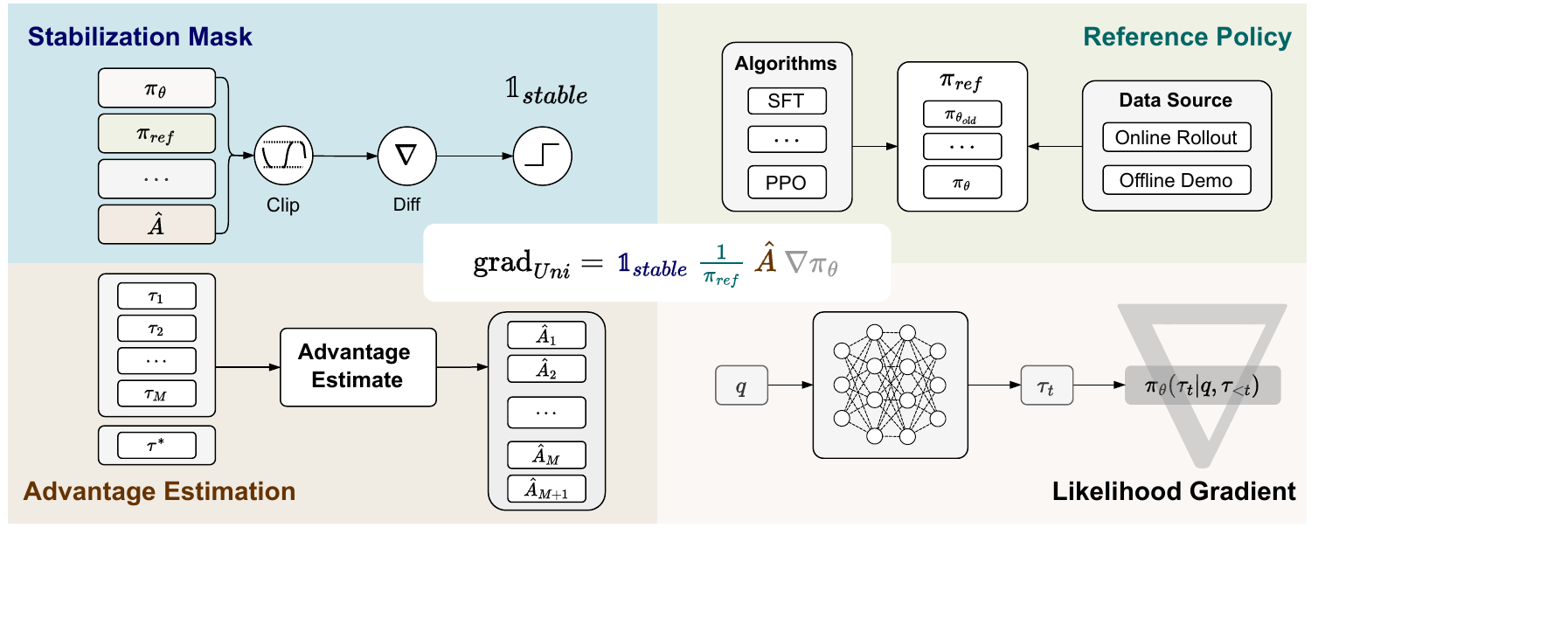}
    \caption{Illustration of the Unified Policy Gradient Estimator. The ``$\nabla$'' in the background of the Likelihood Gradient part refers to the calculation of the gradient with respect to the $\pi_\theta$. 
    }
    \label{fig:upge_overview}
\end{figure}

\vspace{-4mm}
\setlength{\epigraphwidth}{0.85\textwidth}
\noindent\hfill

\newpage
{
  \tableofcontents
}
\newpage

\section{Introduction}

Reinforcement Learning has played an integral role in enhancing the reasoning capabilities of large language models (LLMs) \citep{jaech2024openai, team2025kimi, guo2025deepseek}.
RL allows the model to freely explore the reasoning space in the post-training process and improve its performance based on the feedback provided in the environment. 
However, applying Reinforcement Learning directly to a base model (i.e., ``Zero RL'') \citep{zeng2025simplerlzooinvestigatingtamingzero} presupposes a certain level of inherent capability. This method often falters when applied to weaker models or tasks of high complexity, as the exploration process may fail to explore and discover meaningful reward signals. 
Conversely, the classical Supervised Fine-Tuning (SFT) \citep{wei2021finetuned} offers a direct and efficient method to distill knowledge from high-quality, human-annotated data, enabling models to rapidly and accurately fit the target distribution. Yet this approach often curtails the model's exploratory capabilities, potentially leading to overfitting on the demonstration data and compromising its generalization performance on out-of-distribution inputs. 
Consequently, a sequential ``SFT-then-RL'' pipeline \citep{yoshihara2025practical} has emerged as the standard, adopted by numerous state-of-the-art open-source models. While effective, this multi-stage process, which first elevates the model's capabilities through SFT before refining them with RL, is notoriously resource-intensive and usually requires careful tuning to ensure effectiveness.

To circumvent these challenges, recent works have focused on integrating SFT or SFT-style imitation learning losses directly with RL objectives \citep{yan2025learning, fu2025srft, zhang2025policy}.
In these approaches, the model is updated using a composite loss function. The balance between the imitation and exploration components is governed by various strategies, including a fixed coefficient, a predefined schedule, a dynamic adjustment based on entropy, or a learnable parameter. 
These works predominantly treat the SFT and RL losses as two distinct objectives. And a detailed analysis of \textit{why these two learning signals can be effectively combined within a unified optimization process} remains largely unexplored.

Despite their distinct mathematical formulations, we find that the gradient calculations from these approaches can be viewed as a single, unified form. Inspired by Generalized Advantage Estimator \citep{schulman2015high}, we introduce \framework (UPGE), a framework that formally subsumes the gradients of various post-training objectives into one generalized expression. We provide analysis to show that the various forms of gradients are, in fact, not conflicting.
Instead, they act as complementary learning signals that can jointly guide the optimization process.
However, these gradient estimators possess different characteristics, and there exists a bias-variance tradeoff in their respective gradient components. 
Building upon this unified perspective, we propose \method (\mot), a hybrid algorithm to dynamically choose more desirable training signals by adapting a mixing ratio between the SFT and RL losses.
This mechanism allows \mot to be intrinsically adaptive to models of varying capabilities and data of differing complexities.

We implement a simple instance of \mot, which adaptively switches between SFT and RL based on rollout accuracy, and empirically demonstrate that it achieves strong results.
Our empirical evaluations demonstrate that \mot surpasses strong baselines such as SFT$\rightarrow$GRPO and LUFFY with Qwen2.5-Math-7B, achieving a 7-point gain over our strongest baseline on AIME 2024.
Moreover, \mot also yields substantial improvements even on relatively smaller and weaker models, including Qwen2.5-Math-1.5B and Llama3.1-8B.
Through detailed training dynamics and illustrative training visualizations, we clearly reveal the features and underlying mechanisms of \mot.
The following are several key takeaways:

\begin{tcolorbox}[takeawaysbox]
\begin{enumerate}[leftmargin=1em]
    \item \gpe provides a theoretical unification of a wide spectrum of post-training algorithms, covering both SFT and RL losses within a single formulation~\textbf{($\S$~\ref{sec:unified-view})}.
    \item \mot is capable of outperforming previous post-training and mixed-policy algorithms across a diverse range of models~\textbf{($\S$~\ref{sec:experiments})}.
    \item Dynamic integration of SFT and RL in \mot achieves the highest \emph{Pass@1024}, facilitating enhanced exploration and generalization of the model~\textbf{($\S$~\ref{sec:explor_exploit})}.
\end{enumerate}

\end{tcolorbox}

\section{A Unified View on Post-Training Algorithms} \label{sec:unified-view}

In this section, we adopt a unified perspective to understand both Supervised Fine-Tuning (SFT) and Reinforcement Learning (RL) as post-training objectives.
We present the gradient calculations of various post-training approaches in Table \ref{tab:grad_compare}, with exact derivations of classical approaches presented in the Appendix \ref{sec:appdendix_grad_cal}. From the table, it can be seen that policy gradient calculations for LLM post-training can be written in a unified policy gradient form. 

\begin{tcolorbox}[takeawaysbox]
We propose a unified framework for the gradient calculation of LLM post-training, named the \textcolor{darkred}{Unified Policy Gradient Estimator}:
$$ \text{grad}_{Uni} = \mathbb{1}_{stable} \frac{1}{\pi_{ref}} \hat{A} \nabla \pi_{\theta}.$$All gradient calculations can be written in the unified form.
\end{tcolorbox}

In the following sections, we further show that the differences between different gradient calculations can be broken down into four distinct components.
We theoretically derive the Unified Policy Gradient Estimator from a common objective and provide a detailed analysis of its gradient components. Based on this unified perspective, we then propose the \method (\mot) algorithm.

\begin{table*}[!t]
    \centering
    \caption{Theoretical unified view of various post-training algorithms.}
    \label{tab:grad_compare}
    \resizebox{\textwidth}{!}{
    \begin{tabular}{cccc}
    \toprule
         \textbf{Algorithm} & \textbf{Reference Policy} &\textbf{Advantage Estimate} & \textbf{Unified Policy Gradient Estimator} \\
        \midrule
          SFT & $\pi_{ref} = \pi_\theta$& $\hat{A}_{SFT} \equiv 1$ & $\nabla \mathcal{J}_{SFT}(\theta) = \nabla \pi_\theta (\tau ) \frac{\hat{A}_{SFT} = 1}{\pi_\theta (\tau )}$\\
        \midrule
        \multicolumn{4}{c}{\textbf{Online Reinforcement Learning Methods}} \\
        \midrule
          \makecell{PPO\\\citep{schulman2017proximal}} & $\pi_{ref} = \pi_{\theta_{old}}$&
        $\hat{A}_{PPO} = \text{GAE \citep{schulman2015high}}$ & 
        $\nabla \mathcal{J}_{PPO} = \nabla \pi_\theta (\tau) \frac{\hat{A}_{PPO} \mathbb{1}_{\text{Clip}}}{\pi_{ref} (\tau)}$\\
        \midrule
          \makecell{GRPO\\\citep{shao2024deepseekmath}} & $\pi_{ref} = \pi_{\theta_{old}}$&
        $\hat{A}_{GRPO} = \frac{R(\tau_{j}) - \text{mean} (\{R(\tau_{j})\}_{G_{on}})}{\text{std} (\{R(\tau_{j})\}_{G_{on}})}$ & 
        $\nabla \mathcal{J}_{GRPO} = \nabla \pi_\theta (\tau) \frac{\hat{A}_{GRPO} \mathbb{1}_{\text{Clip}}}{\pi_{ref} (\tau)}$\\
        \midrule
         \makecell{REINFORCE\\\citep{ahmadian2024back}} & $\pi_{ref} = \pi_{\theta}$&
        $\hat{A}_{REINFORCE} = \pm 1$ & 
        $\nabla \mathcal{J}_{REF.}(\theta) = \nabla \pi_\theta (\tau) \frac{\hat{A}_{REF.}}{\pi_\theta (\tau)}$\\
        \midrule
         \makecell{CISPO\\\citep{chen2025minimax}} & $\pi_{ref} = \pi_{\theta_{old}}$&
        $\hat{A}_{CISPO} = \hat{A}_{GRPO}$ & 
        $\nabla \mathcal{J}_{CISPO} = \nabla \pi_\theta (\tau) \frac{\hat{A}_{CISPO} \mathbb{1}_{\text{CIS-Mask}}}{\pi_{ref} (\tau)}$\\
        \midrule
         \makecell{GSPO\\\citep{zheng2025group}} & $\pi_{ref} = \pi_\theta \left(\frac{\pi_{\theta_{old}} (\tau_{i,j} |q_i ) }{\pi_{\theta} (\tau_{i,j} |q_i )}\right)^{1/|\tau_{i,j}|}  $&
        $\hat{A}_{GSPO} = \hat{A}_{GRPO}$ & 
        $\nabla \mathcal{J}_{GSPO} = \nabla \pi_\theta (\tau) \frac{\hat{A}_{GSPO} \mathbb{1}_{\text{Seq-Clip}}}{\pi_{ref} (\tau)}$\\
        \midrule
        \multicolumn{4}{c}{\textbf{Offline/Online Reinforcement Learning Methods}} \\
        \midrule
         \makecell{SRFT (Offline)\\\citep{fu2025srft}} & $\pi_{ref} \equiv 1$&
        $\hat{A}_{SRFT} = \frac{R(\tau_{j}) - \text{mean} (\{R(\tau_{j})\}_{G_{on}\cup G_{off}})}{\text{std} (\{R(\tau_{j})\}_{G_{on}\cup G_{off}})}$ & 
        $\nabla \mathcal{J}_{SRFT} = \nabla \pi_\theta (\tau) \frac{\hat{A}_{SRFT}}{\pi_{ref} (\tau) = 1}$\\
        \midrule
         \makecell{LUFFY (Offline)\\\citep{yan2025learning}} & $\pi_{ref} \equiv 1$&
        $\hat{A}_{LUFFY} = \hat{A}_{SRFT}$ & 
        $\nabla \mathcal{J}_{LUFFY} = \nabla \pi_\theta (\tau) \frac{\hat{A}_{LUFFY}}{\pi_{ref} (\tau) = 1} f_{\text{shape}}'$ \\
    \bottomrule
    \end{tabular}
    }
\end{table*}

\subsection{Components of the Unified Policy Gradient Estimator}
\label{Components of the Unified Policy Gradient Estimator}

We present the Unified Policy Gradient Estimator, our unified framework for gradient calculations. In Table \ref{tab:grad_compare}, we list a series of fundamental and well-studied post-training methods, divided into SFT and two types of RL processes. Apart from providing the closed-form policy gradients of these methods, we also present the decomposition of these methods with detailed components. It can be seen that these seemingly different methods in fact share common components and that all gradients follow our proposed unified framework.

In this paper, we divide the unified gradient into four terms: \textit{stabilization mask}, \textit{reference policy}, \textit{advantage estimate}, and \textit{likelihood gradient}. We address each of the terms below.

\paragraph{Stabilization Mask $\mathbb{1}_{stable}$} Starting from PPO \citep{schulman2017proximal}, the stabilization mask was first derived as an approximation of the TRPO Algorithm \citep{schulman2015trust}.
In practice, the PPO clipping addresses the instability issue during RL training by turning off the current gradient when the current iterate is considered unsafe. In consequent works in Table \ref{tab:grad_compare}, many have provided their modifications on the stability mask, usually motivated by empirical evaluations.

\paragraph{Reference Policy Denominator $\pi_{ref}$} The second term in our unified estimator is the reference policy on the denominator. We note that our notion of reference policy differs from the commonly used rollout policy $\pi_{\theta_{old}}$, for which we provide a discussion in Section \ref{sec:component-analysis}. This denominator denotes a token-level reweight coefficient, usually in the form of an inverse probability. There are multiple choices for this coefficient. For the case of SFT, the policy denominator uses the current policy $\pi_\theta (\tau )$. This is a result of $\Lc = -\log (\pi_\theta (\tau ))$ as the objective function. For the case of PPO-style online RL algorithms, generally, the policy denominator uses the rollout policy $\pi_{\theta_{old}} (\tau )$. Due to the unavailability of $\pi_{ref} (\tau )$ in the offline demonstration dataset, most offline RL algorithms simply assume $\pi_{ref} (\tau )= 1$ for the denominator.

\paragraph{Advantage Estimate $\hat{A}$} 
In traditional RL, the advantage evaluates the additional benefit of taking the current action given the current state. For the context of LLMs, most of the advantage estimation is sequence-level rather than token-level, and measures the quality of the current response sequence. Similar to traditional RL literature, the post-training process seeks to maximize the likelihood of generating positive sequences with high advantage and minimize negative sequences.

\paragraph{Likelihood Gradient $\nabla \pi_\theta (\tau)$} The policy gradient term is a general term which maps gradient information from the actions to the model parameters $\theta$. It is crucial for back-propagating the objective signals to the network weights, and is kept the same across all gradient calculations.

\subsection{Derivation of the Unified Policy Gradient Estimator}
\label{sec:shared-common-objective}

We begin from a simple and common objective shared by all post-training algorithms: improve the likelihood of positive trajectories and decrease the likelihood of negative trajectories such that the total reward in expectation $\max_{\theta} \mathcal{J} (\theta) := \mathbb{E} [r(\tau|q)]$ is maximized.
From this starting point, we theoretically derive our Unified Policy Gradient Estimator. We then show that SFT and RL objectives are not in conflict, and they can be optimized jointly within a single loss.

\paragraph{Common Objective.}
We model the post-training as a process to maximize the expected success rate while keeping the model policy closely adhering to a demonstration dataset (behavior policy) $\pi_\beta$:
\begin{equation}
\label{eq:master_obj}
\begin{aligned}
\mathcal{J}_{\mu}(\theta)
&= \E_{\tau\sim \pi_\theta(\cdot\mid q)}\!\big[r(\tau\mid q)\big]
\;-\; \mu\,\mathrm{KL}\!\big(\pi_\beta(\cdot\mid q)\,\|\,\pi_\theta(\cdot\mid q)\big),
\qquad \mu\ge 0,
\end{aligned}
\end{equation}
where $q\!\sim\!\mathcal{D}$ denotes the question from a given distribution, $\tau$ denotes a trajectory, $r$ denotes the (binary/real) score, and $\pi_\beta$ denotes behavior policy from demonstration.

\paragraph{Gradient of the Common Objective.}
Differentiating and rearranging Equation~\ref{eq:master_obj} (full derivation in Appendix~\ref{app:derivation}), we obtain
\begin{equation}
\label{eq:master_grad_pi_measure}
\begin{aligned}
\nabla_\theta \mathcal{J}_{\mu}(\theta)
&= \E_{\tau\sim \pi_\theta}\!\Big[r(\tau\mid q)\,\nabla_\theta \log \pi_\theta(\tau\mid q)\Big]
\;+\; \mu\,\E_{\tau\sim \pi_\beta}\!\big[\nabla_\theta \log \pi_\theta(\tau\mid q)\big].
\end{aligned}
\end{equation}

\paragraph{From gradient to the Unified Policy Gradient Estimator.}
Applying the measure-change identity (detailed in Appendix~\ref{app:derivation}) with the reference policy $\pi_{ref}$ which we mentioned in Section~\ref{Components of the Unified Policy Gradient Estimator} and using $\nabla \log \pi_\theta=(1/\pi_\theta)\nabla \pi_\theta$ yields the gradient:
\begin{equation}
\label{eq:upge_form}
\nabla_\theta \mathcal{J}_{\mu}(\theta)
=
\E_{\tau\sim \pi_{ref}(\cdot\mid q)}
\!\left[
\frac{1}{\pi_{ref}(\tau\mid q)}\,
\widehat{A}_{uni}(\tau,q)\,
\nabla_\theta \pi_\theta(\tau\mid q)
\right],
\end{equation}
 with the unified advantage
\begin{equation}
\label{eq:A_uni}
\widehat{A}_{uni}(\tau,q)
=
\underbrace{r(\tau\mid q)}_{\widehat{A}_{\mathrm{RL}}(\tau,q)}
\;+\;
\underbrace{\mu\,\frac{\pi_\beta(\tau\mid q)}{\pi_\theta(\tau\mid q)}}_{\widehat{A}_{\mathrm{SFT}}(\tau,q)}.
\end{equation}
In many RL works, the raw score $r(\tau\mid q)$ is replaced by a more structured advantage to reduce variance, provide relative credit assignment within a rollout group, and stabilize step sizes. For example, GRPO uses group-wise normalization:
\begin{equation}
\widehat{A}_{\mathrm{GRPO}}(\tau_j,q) = \frac{R(\tau_j)-\mathrm{mean}(\{R(\tau)\}_{G_{\mathrm{on}}})}{\mathrm{std}(\{R(\tau)\}_{G_{\mathrm{on}}})}.
\end{equation}

When trust-region stabilization masks, as induced by PPO clipping, are inserted multiplicatively without altering the target objective, we obtain our Unified Policy Gradient Estimator:
\begin{equation}
\label{eq:upge_masked}
\begin{aligned}
\mathrm{grad}_{uni}
&=
\E_{\tau\sim \pi_{ref}(\cdot\mid q)}\!\left[
\mathbb{1}_{stable}(\tau,q)\,
\frac{1}{\pi_{ref}(\tau\mid q)}\,
\widehat{A}_{uni}(\tau,q)\,
\nabla_\theta \pi_\theta(\tau\mid q)
\right]
\\[2pt]
&=\;
\mathbb{1}_{stable}\;
\frac{1}{\pi_{ref}}\;
\hat{A}\;
\nabla \pi_\theta.
\end{aligned}
\end{equation}
The trust-region surrogate that produces the mask is given in Appendix~\ref{app:ppo-mask-subsec}.

The gradient in \eqref{eq:master_grad_pi_measure} is the sum of two terms: (i) a reward + trust-region term sampled from $\pi_\theta$ and (ii) a data-adherence (SFT) term sampled from $\pi_\beta$. Both terms map to the same estimator via \eqref{eq:upge_form}–\eqref{eq:upge_masked} by choosing $\pi_{ref}$ accordingly (e.g., $\pi_{\theta_{old}}$ for on-policy trust-region updates and $\pi_\beta$ for SFT/offline updates). Therefore, SFT and RL optimize a single Common Objective \eqref{eq:master_obj} and can be trained jointly within one loss without intrinsic conflict.

\subsection{Gradient Component Analysis} \label{sec:component-analysis}

\begin{tcolorbox}[takeawaysbox]
\begin{enumerate}[leftmargin=1em]
    \item While all algorithms share the same Common Objective, bias-variance trade-offs still exist across current instances for different components of the unified gradient estimator.
    \item We can improve the post-training process by constructing a better and more suitable estimation of the policy gradient.
\end{enumerate}

\end{tcolorbox}

Across the wide spectrum of algorithms contained in our previous discussions and Table \ref{tab:grad_compare}, it can be inferred that the four components that construct the unified gradient estimator are motivated by different procedures in the post-training process. 
To better illustrate the relationship between the different processes with the respective components of our unified gradient, we present Figure \ref{fig:upge_overview}.

We divide the post-training process of LLMs into the four steps shown in Figure \ref{fig:upge_overview}: i) First, the LLM makes the decision on its data source, either to use data from an offline demonstration dataset, from self-generated rollout data, or a mixture of both. In this process, the policy likelihood $\pi_\theta$ of the data with respect to the current LLM is generated. ii) Given the data source used for data generation, a reference policy $\pi_{ref}$ is calculated. iii) After data collection is complete, the algorithm calculates the advantage estimation $\hat{A}$ for each token/sequence. iv) Lastly, the algorithm may choose to apply an additional masking procedure $\mathbb{1}_{stable}$ to disable the gradient calculation of various tokens, which could lead to theoretical or numerical stability issues. After these four steps, the components are collected to construct the policy gradient $\text{grad}_{Uni}$, which is used to update the LLM in the system. 
Similar to GAE presented in \citep{schulman2015high}, multiple instantiations exist to estimate the policy gradient. However, different component selections introduce various degrees of bias and variance, where a trade-off is often encountered. 
We provide the following discussion on key components of the unified gradient below.

\paragraph{Reference Policy Calculation}
Practically speaking, the reference policy denominator places a weight on each token-level update such that any token with a smaller probability, often implying more significance, is weighted more. 
SFT and REINFORCE assign weights inversely proportional to the current policy $\pi_\theta$, enforcing a bigger update when the model outputs a small probability. 
On the other hand, when the data is generated with an outdated model, algorithms such as PPO assign weights inversely proportional to the rollout policy $\pi_{\theta_{old}}$, and offline RL does not assign additional weights for tokens.

Theoretically, the reference policy is usually set given the source of the dataset and/or the rollout policy. 
For online RL methods that train purely with on-policy data, such as REINFORCE \citep{ahmadian2024back}, uses $\frac{1}{\pi_\theta}$, which produces an unbiased estimate for gradient calculation. However, these methods usually suffer from high variance.
For PPO-style online RL algorithms, the reference policy refers to the rollout policy, which is a result of importance sampling.
PPO is a numerically simplified version of TRPO \citep{schulman2015trust}. PPO makes conservative updates that effectively reduce variance. However, the important sampling ratio is in fact theoretically ill-posed and could introduce systematic bias, as discussed in GSPO \citep{zheng2025group}. GSPO has also proposed a novel calculation for $\pi_{ref}$, as shown in Table \ref{tab:grad_compare}.
On the other hand, in the offline setting, the choice for reference policy $\pi_{ref}$ is limited, since the algorithm generally has no access to the rollout policy. If we are given the assumption that the offline data evenly covers the entire state-action rollout space, then the importance sampling ratio $r(\theta) = \frac{\pi_\theta (\tau ) }{\pi_{ref} (\tau )}$ reduces to $\pi_\theta (\tau )$ by setting constant $\pi_{ref} (\tau ) = 1$. Notably, it is apparent that setting $\pi_{ref} (\tau ) = 1$ introduces much bias at the cost of numerical stability.
For the SFT case, we can consider that the domain-specific dataset is generated with respect to the expert policy $\pi^\ast$; therefore, no weighted sampling is required. 
Neither of the two approaches is entirely theoretically justified, from an RL perspective; both require a lower bound on the state-action visitation of all the possible state-action pairs \citep{kakade2003sample}, which can not be satisfied due to the severely limited datasets in practice.

Apart from the strong connection to data source and sampling polices, some studies employ a hand-crafted reweight factor within the reference policy denominator. These works \citep{yan2025learning, zhang2025policy} typically find desirable token properties and purposefully place a higher/lower weight on these desirable/undesirable tokens, respectively.

\paragraph{Choice of Stabilization Mask}
The clipping operation introduced in PPO was the first to explicitly add a stop gradient operation on LLM post-training. Clipping gradient estimation where the importance sampling strays too far from $1$ is an effective approach to address high variances. However, this aggressive clipping behavior has been criticized by some to be overly conservative: Both DAPO \citep{yu2025dapo} and CISPO \citep{chen2025minimax} stated that the classical PPO approach drops all the tokens corresponding to large model updates, and that many such tokens are in fact crucial for stabilizing entropy and facilitating scalable RL. DAPO presented a slight modification to the clipping threshold, and CISPO further extended the notion of token-wise mask, where more granular tuning was introduced to decide whether gradients from specific tokens should be dropped. The recent work of \citet{cui2025entropy} has demonstrated that many existing algorithms negatively impact the output entropy during training and introduced Clip-Cov, adding another clipping mechanism to address the entropy-collapse encountered in training. While these methods demonstrated performance enhancements in practice, they also provide additional sources of bias.

On the other hand, works such as GSPO \citep{zheng2025group} have stated that the PPO-style clipping is inherently noisy and inefficient for sample exploitation: GSPO clips a much larger fraction of tokens and yet demonstrated superior training efficiency.

In addition, post-training algorithms using offline data have chosen to purposefully remove the clipping from training, mostly guided by performance. Though setting $\pi_{ref} (\tau ) = 1$ as the policy denominator does effectively reduce the instability in gradient calculations.

\paragraph{Advantage Estimation}
There are two commonly used settings for estimating the sequence-level advantage function: the fixed advantage setting and the adaptive advantage setting. The fixed setting considers $\hat{A} = \pm 1$ given the rule-based verification, which is adapted by REINFORCE and implicitly by SFT (where all sequences are positive samples). Alternatively, recent studies have focused on using adaptive advantage estimations, performing re-centering or normalization based on the performance of the current rollout group. Notably, GRPO and its variants, such as DAPO \citep{yu2025dapo} and LUFFY \citep{yan2025learning}, use unit normalization such that the advantage estimation of the group has a unit standard deviation. Other approaches, such as Dr. GRPO \citep{liu2025understanding}, RLOO \citep{ahmadian2024back}, and REINFORCE++ \citep{hu2025reinforce++}, claim that dividing the standard deviation introduces a difficulty bias and that only recentering is adequate. 

Apart from sequence-level advantage estimate $\hat{A}_{i,j}$, recent works \citep{wang2025beyond, yang2025treerpo, sun2025ktae} have also adapted a more granular token-level advantage estimate $\hat{A}_{i,j,t}$ to a varying degree of success.

\paragraph{A Combination of Gradient Estimators}

Although bias-variance trade-offs exist for the gradient estimator, we state that, given data distribution assumptions and sufficient data samples, all policy gradient estimators covered in our framework should result in an effective direction of improvement for the Common Objective. 
To effectively reduce the variance and bias for each policy update, we can treat instances of policy gradient as different noisy measurements of the true policy gradient, and perform a weighted average to generate a more accurate gradient estimation, similar to complementary filters \citep{marantos2015uav}.

However, the complexity of LLM RLVR introduces additional challenges. The current state of the behavior policy $\pi_\theta$ and its relationship with the respective tasks also greatly impacts the bias-variance tradeoff of each instance of the gradient estimator.
For instance, RL-zero is significantly more effective for the Qwen model series compared to LlaMA, but SFT is effective for both methods \citep{zeng2025simplerlzooinvestigatingtamingzero}; SFT $\rightarrow$ RL and RL $\rightarrow$ SFT also yield significantly different results on the same LLM \citep{fu2025srft}.
We argue that for constructing a post-training algorithm with better effectiveness and efficiency, a dynamic and adaptive mechanism is crucial to construct optimal gradient components.

\subsection{Hybrid Post-Training with Performance Feedback}
\label{sec:method}

Our unified perspective above shows that different post-training losses have the same optimization objective with different characteristics.
Inspired by this view, we propose the \method (\mot) algorithm.
We use a mixed loss $\mathcal{L} = \alpha \mathcal{L}_{\mathrm{RL}} + \beta \mathcal{L}_{\mathrm{SFT}}$, which contains the weighted on-policy RL loss $\mathcal{L}_{\mathrm{RL}}$ and SFT loss $\mathcal{L}_{\mathrm{SFT}}$, to optimize the target LLM $\pi_\theta$. The weights of the two losses ($\alpha$ and $\beta$) are determined by the real-time sampling performance of the model.

\paragraph{Performance on Single Question.} 
For any question $q$ provided to the LLM, we first obtain both a supervising trajectory $\tau^\star$ and the model's performance $P$ on the question.
Specifically, we draw $n$ on-policy trajectories $\{\tau_i\}^n_{i=1} \sim \pi_{\theta}(\cdot\mid q)$ and evaluate them with a verifier $v:\tau_i\to\{0,1\}$. This verifier is the same as the rule-based reward function and the model's performance $P$ is defined as the mean of these $n$ verification scores:
\begin{equation}
\label{eq: compute r}
v(\tau_i) = R(\tau_i) =\begin{cases}
 1 & \text{if } \tau_i \text{ contains the correct answer of } q  \\
 0 & \text{otherwise}
\end{cases} 
\end{equation}
\begin{equation}
P = \frac{1}{n}\sum_{i=1}^n v(\tau_i)
\end{equation}
Intuitively, $P$ indicates how well the current policy performs on $q$ across multiple trajectories.

\paragraph{Feedback Coefficients.} Then, we obtain the coefficients of on-policy RL loss $\alpha$ and SFT loss $\beta$ based on the performance feedback:
\begin{equation}
\alpha = f(P), \quad \beta = g(P),
\end{equation}
where the $f$ and $g$ are the specific feedback functions.
Experientially, when the model demonstrates strong capability, it is advantageous to emphasize on-policy RL to foster exploration; conversely, when the model’s competence is limited, SFT should take precedence to ensure correct guidance. Consequently, $f$ ought to be positively correlated with $P$, whereas $g$ should exhibit a negative correlation. In this paper, we employ a pair of simple yet empirically effective switch functions $f$ and $g$:
\begin{equation}
\alpha = f(P) =\begin{cases}
 1 & \text{if } P>\gamma   \\
 0 & \text{if } P\leq\gamma 
\end{cases} , \quad
\beta = g(P) =\begin{cases}
 1 & \text{if } P\leq\gamma    \\
 0 & \text{if } P>\gamma  
\end{cases}
\end{equation}
The switch gate $\gamma$ enables the model to perform SFT when its performance falls below a predefined threshold, and RL otherwise.

\paragraph{Mixed Loss.} Finally, we calculate the RL loss $\mathcal{L}_{\mathrm{RL}}$ with the already generated $n$ on-policy trajectories $\tau_i$ and SFT loss $\mathcal{L}_{\mathrm{SFT}}$ with the supervising trajectory $\tau^\star$, and we use Dr. GRPO as the on-policy RL algorithm:
\begin{equation}
\label{eq: rl loss}
\mathcal{L}_{\mathrm{RL}}
=
-\frac{1}{n}\sum_{i=1}^{n}\sum_{t=1}^{|\tau_i|}
\min\!\Big(
r_{i,t}\,A_{i,t},\;
\operatorname{clip}\!\big(r_{i,t},\,1-\epsilon,\,1+\epsilon\big)\,A_{i,t}
\Big)
\end{equation}
\begin{equation}
\label{eq: sft loss}
\mathcal{L}_{\mathrm{SFT}}=
-\frac{1}{\lvert \tau^\star \rvert}
\sum_{t=1}^{\lvert \tau^\star \rvert}
\log \pi_\theta\!\left(\tau^\star_t \,\middle|\, q, \tau^\star_{<t}\right)
\end{equation}
where $r_{i,t}=\frac{\pi_{\theta}\left(\tau_{i,t}\mid q, \tau_{i,<t}\right)}{\pi_{\theta_{old}}\left(\tau_{i,t}\mid q, \tau_{i,<t}\right)}$ is the per-token importance sampling ratio, $A_{i,t}\equiv A_i =\frac{  R(\tau_i)-\mathrm{mean}\left(\left\{\, R(\tau_i)\ \middle|\ i=1,2,\ldots,n \right\}\right)}{\mathrm{std}\left(\left\{\, R(\tau_i)\ \middle|\  i=1,2,\ldots,n \right\}\right)}$ is the advantage  and $\epsilon$ is the clip gate hyperparameter.
The mixed loss is then obtained by taking a weighted average of these two losses using performance feedback coefficients $\alpha$ and $\beta$: 
\begin{equation}
\mathcal{L} = \alpha \mathcal{L}_{\mathrm{RL}} + \beta \mathcal{L}_{\mathrm{SFT}}
\end{equation}

\begin{algorithm}[t]
\caption{The \method\ (\mot) Algorithm}
\label{alg:upt}
\SetAlgoLined
\DontPrintSemicolon
\KwIn{Pretrained LLM (policy) $\pi_\theta$; SFT dataset $\mathcal{D}_{\mathrm{SFT}}=\{(q,\tau^\star)\}$ with supervising trajectories $\tau^\star$; verifier $v$; on\mbox{-}policy samples number $n$; total training steps $T$; feedback functions $f$ and $g$; learning rate $\eta$}
\KwOut{Fine\mbox{-}tuned policy $\pi_{\theta^\ast}$.}

\For{$t = 1$ \KwTo $T$}{
  \For{$i=1$ \KwTo $n$}{
    Sample trajectory $\tau_i \sim \pi_\theta(\cdot \mid q)$\;
    Evaluate with verifier (rule-based reward): \;
    \Indp
      $v(\tau_i) \leftarrow R(\tau_i) \in \{0,1\}$\;
    \Indm
  }
  $P \leftarrow \frac{1}{n}\sum_{i=1}^{n} v(\tau_i)$\;
  $\alpha \leftarrow f(P), \quad \beta \leftarrow g(P)$\; \Hash{Performance feedback on question $q$}
  
  Compute on-policy RL loss $\mathcal{L}_{\mathrm{RL}}$ using rollouts $\{\tau_i\}$ and normalized advantages derived from $\{R(\tau_i)\}$.

  Compute SFT loss $\mathcal{L}_{\mathrm{SFT}}$ on the supervising trajectory $\tau^\star$.
  
  $\mathcal{L} \leftarrow \alpha\,\mathcal{L}_{\mathrm{RL}} + \beta\,\mathcal{L}_{\mathrm{SFT}}$\; \Hash{Mixed loss with performance feedback coefficients}

  $\theta \leftarrow \theta - \eta\,\nabla_{\theta}\mathcal{L}$\
}
\Return{$\pi_{\theta^\ast}$}\;
\end{algorithm}

\section{Experiments}
\label{sec:experiments}

\subsection{Experimental Setup}

\paragraph{Models}
To evaluate the generalizability of \mot across different backbone models, we conduct experiments using Qwen and LLaMA models of various scales.
The models we experiment with are as follows:
\begin{itemize}[leftmargin=2em]
    \item \textbf{Qwen Family:} Qwen2.5-Math-1.5B, Qwen2.5-Math-7B~\citep{yang2024qwen2};
    \item \textbf{LLaMA Family:} LLaMA-3.1-8B~\citep{grattafiori2024llama};
\end{itemize}

\begin{table*}[!t]
\definecolor{bluishyellow}{rgb}{0.84, 0.92, 0.85}
\definecolor{bluishcyan}{rgb}{0.74, 0.93, 0.95}
\centering
\caption{In-distribution and out-of-distribution performance of \mot and baselines on Qwen2.5-Math-7B. $^*$ means the results are taken from the corresponding paper.}
\label{tab:Qwen7B-results}
\resizebox{\textwidth}{!}{
\begin{tabular}{lccccccc|ccc}
\toprule
\multirow{2}{*}{\textbf{Model}} & \multicolumn{7}{c}{\textbf{In-Distribution}} & \multicolumn{3}{c}{\textbf{Out-of-Distribution}} \\
\cmidrule(lr){2-8} \cmidrule(lr){9-11}
 & \textbf{AIME 24} & \textbf{AIME 25} & \textbf{AMC} & \textbf{MATH-500} & \textbf{Minerva} & \textbf{Olympiad} & \textbf{Avg} & \textbf{ARC-c} & \textbf{GPQA} & \textbf{Avg} \\
\midrule
Qwen2.5-Math-7B
  & $12.3$ & $4.7$  & $33.0$ & $43.6$ & $8.8$  & $13.6$ & $19.3$ & $30.9$ & $28.3$ & $29.6$ \\
\midrule
\multicolumn{1}{r}{SFT}                                     
  & $25.1$ & $\mathbf{22.8}$ & $56.1$ & $84.2$ & $33.8$ & $44.7$ & $44.5$ & $67.4$ & $25.3$ & $46.4$ \\
\multicolumn{1}{r}{GRPO}                                  
  & $19.4$ & $13.8$ & $59.1$ & $81.8$ & $38.2$ & $46.2$ & $43.1$ & $81.2$ & $36.4$ & $58.8$ \\
\multicolumn{1}{r}{SFT $\rightarrow$ GRPO}
  & $25.7$ & $21.6$ & $62.2$ & $84.6$ & $38.2$ & $46.8$ & $46.5$ & $67.7$ & $30.8$ & $49.3$ \\
\midrule
\multicolumn{1}{r}{LUFFY} & $26.1$ & $21.8$ & $66.2$ & $88.4$ & $41.9$ & $54.1$ & $49.8$ & $80.8$ & $39.4$ & $60.1$ \\
\multicolumn{1}{r}{SRFT} & $18.4$ & $15.5$ & $55.9$ & $83.8$ & $42.6$ & $48.9$ & $44.2$ & $80.5$ & $36.8$ & $58.7$ \\
\rowcolor{lightblue!100}\multicolumn{1}{r}{\mot} & $\mathbf{33.0}$ & $21.9$ & $\mathbf{69.4}$ & $\mathbf{89.2}$ & $\mathbf{46.0}$ & $\mathbf{56.9}$ & $\mathbf{52.7}$ & $\mathbf{81.6}$ & $\mathbf{42.9}$ & $\mathbf{62.3}$ \\
\midrule
\textcolor{gray}{Qwen2.5-Math-7B-Ins.}
  & \textcolor{gray}{$11.8$} & \textcolor{gray}{$9.8$} & \textcolor{gray}{$48.3$} & \textcolor{gray}{$83.2$} & \textcolor{gray}{$34.2$} & \textcolor{gray}{$39.3$} & \textcolor{gray}{$37.8$} & \textcolor{gray}{$72.7$} & \textcolor{gray}{$29.3$} & \textcolor{gray}{$51.0$} \\
\textcolor{gray}{PRIME-Zero$^*$}                               
  & \textcolor{gray}{$17.0$} & \textcolor{gray}{$12.8$} & \textcolor{gray}{$54.0$} & \textcolor{gray}{$81.4$} & \textcolor{gray}{$39.0$} & \textcolor{gray}{$40.3$} & \textcolor{gray}{$40.8$} & \textcolor{gray}{$73.3$} & \textcolor{gray}{$18.2$} & \textcolor{gray}{$45.8$} \\
\textcolor{gray}{SimpleRL-Zero$^*$}                                 
  & \textcolor{gray}{$27.0$} & \textcolor{gray}{$6.8$}  & \textcolor{gray}{$54.9$} & \textcolor{gray}{$76.0$} & \textcolor{gray}{$25.0$} & \textcolor{gray}{$34.7$} & \textcolor{gray}{$37.4$} & \textcolor{gray}{$30.2$} & \textcolor{gray}{$23.2$} & \textcolor{gray}{$26.7$} \\
\textcolor{gray}{OpenReasoner-Zero$^*$}                             
  & \textcolor{gray}{$16.5$} & \textcolor{gray}{$15.0$} & \textcolor{gray}{$52.1$} & \textcolor{gray}{$82.4$} & \textcolor{gray}{$33.1$} & \textcolor{gray}{$47.1$} & \textcolor{gray}{$41.0$} & \textcolor{gray}{$66.2$} & \textcolor{gray}{$29.8$} & \textcolor{gray}{$48.0$} \\
\textcolor{gray}{Oat-Zero$^*$}                                  
  & \textcolor{gray}{$33.4$} & \textcolor{gray}{$11.9$} & \textcolor{gray}{$61.2$} & \textcolor{gray}{$78.0$} & \textcolor{gray}{$34.6$} & \textcolor{gray}{$43.4$} & \textcolor{gray}{$43.8$} & \textcolor{gray}{$70.1$} & \textcolor{gray}{$23.7$} & \textcolor{gray}{$46.9$} \\
\bottomrule
\end{tabular}
}
\end{table*}

\paragraph{Benchmarks}
We evaluate \mot on $6$ mathematical reasoning benchmarks: AIME 2024~\citep{li2024numinamath},  AIME 2025~\citep{li2024numinamath}, AMC~\citep{li2024numinamath}, MATH-500~\citep{hendrycks2021measuring},  Minerva~\citep{lewkowycz2022solving}, and OlympiadBench~\citep{he2024olympiadbench}.
AMC~\citep{li2024numinamath} comprises problems drawn from the AMC12 2022 and AMC12 2023 examinations.
Moreover, when employing Qwen2.5-Math-7B as the backbone, we further conduct evaluations on GPQA-Diamond~\citep{rein2024gpqa}, a challenging and high-quality subset of the Graduate-Level Google-Proof Question Answering benchmark, as well as on ARC-c~\citep{clark2018think}, an open-domain reasoning benchmark.

\paragraph{Evaluation Setup}
We set the maximum generation length to $8,192$ tokens, unless otherwise specified.
For the main experiments, following DeepSeek-R1~\citep{guo2025deepseek}, we adopt the \emph{Pass@k} evaluation protocol~\citep{chen2021evaluating} and report \emph{Pass@1} using non-zero temperature sampling.
To ensure a fair comparison with previous works~\citep{yan2025learning,fu2025srft}, we compute avg@32 for AIME 24, AIME 25, and AMC (avg@1 for others) using a temperature of $0.6$ and a top-$p$ value of $0.95$ for accuracy calculation.

\paragraph{Baselines}
Since \mot dynamically integrates GRPO~\citep{shao2024deepseekmath} and SFT, the most natural baselines are SFT and GRPO individually.
Furthermore, we compare \mot against the mix-policy approach LUFFY~\citep{yan2025learning}.
For experiments using Qwen2.5-Math-7B as the backbone, we additionally include SFT$\rightarrow$GRPO and SRFT\footnote{The results of SRFT are based on our own implementation, as the official code is not public.}~\citep{fu2025srft} as a baseline, as well as models trained with the Zero-RL procedure on the same backbone for a more comprehensive comparison. We also use PRIME-Zero \citep{cui2025process}, SimpleRL-Zero \citep{zeng2025simplerl}, OpenReasoner-Zero \citep{hu2025open} and Oat-Zero \citep{liu2025understanding} as baselines.

\paragraph{Implementation Details}
We apply GRPO~\citep{shao2024deepseekmath} as the RL algorithm to implement \mot.
We introduce a gating mechanism that adaptively assigns the coefficients $\alpha$ and $\beta$ to the RL loss and the SFT loss based on the rollout performance, respectively.
Formally, the gating mechanism is defined as:
\[
(\alpha, \beta) =
\begin{cases}
(0, 1), & \text{if } P \leq \gamma, \\[6pt]
(1, 0), & \text{if } P > \gamma,
\end{cases}
\]
where $P$ denotes model's performance as introduced in Section~\ref{sec:method} and $\gamma$ is the gate threshold.
We fix $\gamma$ at $0$ throughout all experiments on the Qwen Family models and $2/8$ for LLaMA, and provide relative ablation studies in Section~\ref{sec:gate_threshold_ablation}.
For hyperparameters, we use a constant learning rate of $5 \times 10^{-6}$ and adopt the AdamW optimizer for the policy model.
For rollout, we sample $8$ responses using a temperature of $1.0$.
The maximum generation length is set to $8,192$ tokens for all other models.
For other details that may not have been explicitly introduced, we have endeavored to follow previous works as closely as possible~\citep{zhao2025learning,zuo2025ttrl}.
All experiments were conducted on 8 x NVIDIA A800 80GB GPUs.

\subsection{Main Results}

Table~\ref{tab:Qwen7B-results} presents the overall performance of \mot on Qwen2.5-Math-7B.
As introduced in Section~\ref{sec:method}, in our implementation of \mot, the coefficients of the RL and SFT loss terms are both degraded and simplified into a binary form.
Despite this highly streamlined experimental setup, \mot still yields substantial performance gains.
It not only significantly outperforms both SFT-only and GRPO-only baselines, but also surpasses SFT$\rightarrow$GRPO, which requires substantially higher computational cost.
This suggests that simply concatenating the two training stages is not the most effective strategy.
Moreover, \mot achieves marked improvements over existing mixed-policy approaches such as LUFFY and SRFT, with particularly notable gains of 6.9 and 14.6 points on AIME 2024, respectively.
Furthermore, we conduct experiments on models of different scales and families to evaluate the effectiveness of \mot, including LLaMA3.1-8B and Qwen2.5-Math-1.5B, as shown in Table~\ref{tab:llama qwen results}.
Compared with SFT, GRPO, and LUFFY, \mot achieves substantial performance gains.

\begin{table*}[t]
\definecolor{bluishyellow}{rgb}{0.84, 0.92, 0.85}
\centering
\caption{Performance of \mot and baselines on LaMA3.1-8B and Qwen2.5-Math-1.5B. $^*$ means the results are taken from the LUFFY paper~\citep{yan2025learning}.}
\label{tab:llama qwen results}
\resizebox{.85\textwidth}{!}{%
\begin{tabular}{lccccccc}
\toprule
\textbf{Model} & \textbf{AIME 24} & \textbf{AIME 25} & \textbf{AMC} & \textbf{MATH-500} & \textbf{Minerva} & \textbf{Olympiad} & \textbf{Avg} \\
\midrule
LLaMA3.1-8B
  & $0.4$ & $0.1$  & $4.7$ & $13.8$ & $4.8$  & $3.9$ & $4.6$       \\
\midrule
\multicolumn{1}{r}{SFT$^*$}                                     
  & $0.5$ & $0.1$ & $5.4$ & $20.2$ & $4.0$ & $5.3$ & $5.9$ \\
\multicolumn{1}{r}{GRPO$^*$}                                  
  & $0.3$ & $0.5$ & $9.4$ & $23.4$ & $17.6$ & $6.1$ & $9.6$ \\
\multicolumn{1}{r}{LUFFY$^*$} & $1.9$ & $0.1$ & $13.5$ & $39.0$ & $15.1$ & $9.6$ & $13.2$ \\
\rowcolor{lightblue!100}\multicolumn{1}{r}{\mot} & $\mathbf{2.1}$ & $\mathbf{1.2}$ & $\mathbf{18.6}$ & $\mathbf{47.8}$ & $\mathbf{18.8}$ & $\mathbf{20.4}$ & $\mathbf{18.2}$ \\
\midrule
Qwen2.5-Math-1.5B
  & $2.8$ & $6.1$  & $24.5$ & $32.8$ & $11.0$  & $16.4$ & $15.6$      \\
\midrule
\multicolumn{1}{r}{SFT}                                     
  & $14.7$ & $17.6$ & $45.4$ & $78.4$ & $29.4$ & $35.7$ & $36.9$ \\
\multicolumn{1}{r}{GRPO}                                
  & $12.2$ & $8.5$ & $43.8$ & $71.0$ & $33.1$ & $35.3$ & $34.0$ \\
\multicolumn{1}{r}{LUFFY} & $14.1$ & $9.4$ & $43.5$ & $75.2$ & $26.1$ & $39.7$ & $34.7$ \\
\rowcolor{lightblue!100}\multicolumn{1}{r}{\mot} & $\mathbf{16.6}$ & $\mathbf{17.8}$ & $\mathbf{51.0}$ & $\mathbf{81.0}$ & $\mathbf{37.5}$ & $\mathbf{47.3}$ & $\mathbf{41.9}$ \\
\bottomrule
\end{tabular}
}
\end{table*}

\section{Empirical Analysis}
Our empirical analysis progressively reveals how \mot reconciles exploration and exploitation, stabilizes training, and ultimately enhances the reasoning ability.
We begin in $\S$~\ref{sec:explor_exploit} with an examination of \textcolor{cyan!40!black}{\textit{exploration}} and \textcolor{cyan!40!red}{\textit{exploitation}}.
In $\S$~\ref{sec:training_visualization}, we provide a training visualization, contrasting \mot with the conventional SFT$\rightarrow$GRPO.
Next, $\S$~\ref{sec:training_dynamics} investigates fine-grained training metrics of \mot.
Building on this, $\S$~\ref{sec:impact_of_off-policy_rl} explores the role of off-policy RL, testing whether alternative strategies for utilizing offline data yield benefits.
Finally, $\S$~\ref{sec:gate_threshold_ablation} presents a gate threshold ablation study.

\subsection{Exploration and Exploitation}
\label{sec:explor_exploit}

\mot inherently achieves an adaptive switching between RL and SFT. These two paradigms naturally correspond to the learning modes of \textcolor{cyan!40!black}{\textit{exploration}} and \textcolor{cyan!40!red}{\textit{exploitation}}.
Accordingly, we can examine whether \mot addresses the initial challenges from both perspectives.

\begin{figure}[!h]
    \centering
    \includegraphics[width=\textwidth]{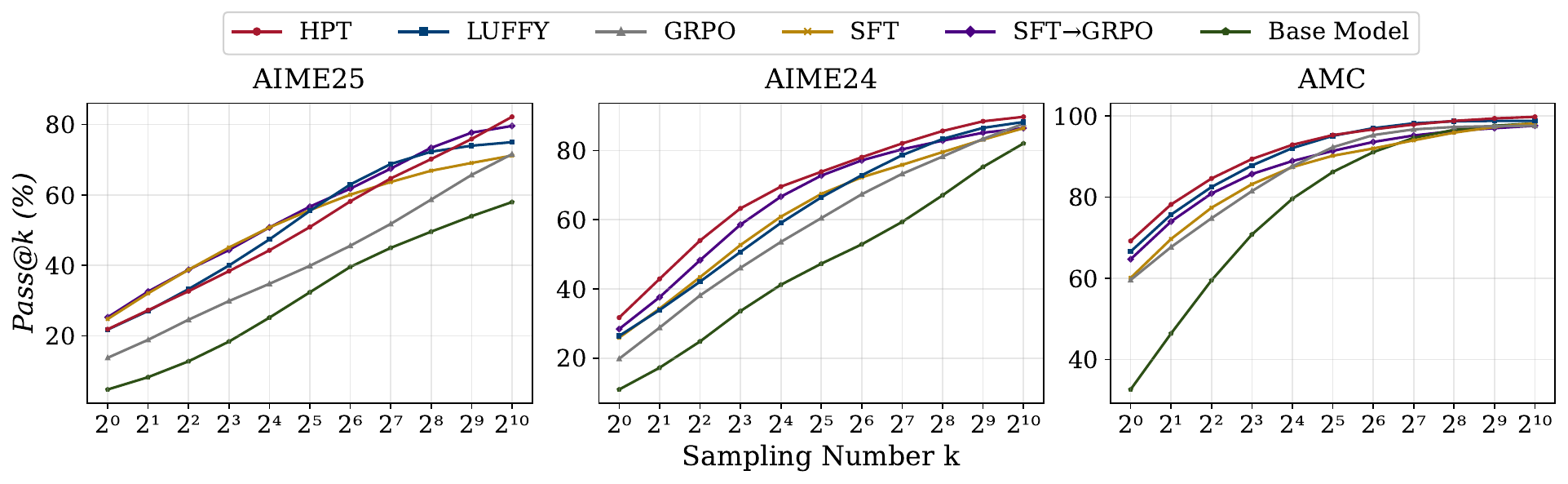}
    \caption{\emph{Pass@k} performance of \mot against baselines on Qwen2.5-Math-7B. The evaluation spans 3 benchmarks, with \emph{Pass@k} values estimated via bootstrap sampling from a set of $2048$ generated solutions per problem.
    }
    \label{fig:passk}
\end{figure}

\paragraph{\textcolor{cyan!40!black}{Exploration}}
From the exploration perspective, we want to analyze the model’s \emph{Pass@k} performance after training with \mot.
Recently, Limit-of-RLVR~\citep{yue2025does} demonstrated that while RLVR training yields a significant improvement in \emph{Pass@1}, it does not lead to gains in large-$k$ \emph{Pass@k}.
In other words, RLVR does not expand the capability boundary of the base model.
This finding has sparked broad discussions regarding the relationship between a model’s exploratory capacity and its \emph{Pass@k} performance. Moreover, \emph{Pass@k} has increasingly been recognized as a widely accepted metric for evaluating both the upper bound of model capability and its exploration ability.
We follow~\citet{yue2025does} to evaluate \emph{Pass@k} up to 1024 for each problem of AIME25, AIME24, and AMC for \emph{Pass@k} evaluation.
Based on these sets of generated solutions, we apply bootstrap sampling to obtain accurate estimates of \emph{Pass@k} scores for various values of $k$.
Figure~\ref{fig:passk} illustrates the resulting \emph{Pass@k} curves, comparing \mot against baselines and the base model.

\begin{itemize}[leftmargin=2em]
    \item 
    First, we can observe that methods incorporating SFT achieve higher large-$k$ \emph{Pass@k} compared to the GRPO (purely RL).
    This may be attributed to the introduction of data outside the model’s own distribution during SFT, which increases output uncertainty while also providing new knowledge from offline data, thereby enhancing the model’s exploratory capacity.
    \item 
    Furthermore, we identify an interesting phenomenon: since \mot dynamically integrates RL (GRPO) with SFT, we might intuitively expect its large-$k$ \emph{Pass@k} performance to fall between that of the two individual methods.
    However, \mot achieves the highest large-$k$ \emph{Pass@k} performance overall.
    \emph{This indicates that \method not only delivers substantial improvements in Pass@1, but also maximally preserves and enhances the model’s exploratory ability.}
\end{itemize}

\begin{table*}[!h]
\centering
\definecolor{darkgreen}{RGB}{50,100,0}
\definecolor{darkred}{RGB}{200,0,0}
\newcommand{\gain}[2]{\textcolor{darkred}{+#1}/\textcolor{darkgreen}{-#2}}
\newcommand{\gainp}[2]{\textcolor{darkred}{+#1\%}/\textcolor{darkgreen}{-#2\%}}

\caption{
Bidirectional analysis of exclusive solves on MATH-500, comparing the Qwen2.5-Math-7B trained with \mot against baseline methods (GRPO and LUFFY).
The notation \textcolor{darkgreen}{+X}\,/\,\textcolor{darkred}{-Y} in each cell indicates the performance trade-off: \textcolor{darkred}{+X} represents the number of problems solved by the \mot but not the baseline, while \textcolor{darkgreen}{-Y} represents the number solved by the baseline but not by the \mot.
}
\label{tab:mot-ex-solves}
\resizebox{\textwidth}{!}{
  \begin{tabular}{@{}lcccccc@{}}
  \toprule
  Methods & Level 1 & Level 2 & Level 3 & Level 4 & Level 5 & Overall \\
  & (N=43) & (N=90) & (N=105) & (N=128) & (N=134) & (N=500) \\
  \midrule
  \textbf{GRPO}\\
  \quad Absolute & \gain{0}{0} & \gain{5}{1} & \gain{9}{2} & \gain{17}{4} & \gain{27}{8} & \gain{58}{15} \\
  \quad Percentage & \gainp{0.0}{0.0} & \gainp{5.6}{1.1} & \gainp{8.6}{1.9} & \gainp{13.3}{3.1} & \gainp{20.1}{6.0} & \gainp{11.6}{3.0} \\
  \textbf{LUFFY}\\
  \quad Absolute & \gain{1}{0} & \gain{5}{1} & \gain{5}{3} & \gain{10}{5} & \gain{22}{7} & \gain{43}{16} \\
  \quad Percentage & \gainp{2.3}{0.0} & \gainp{5.6}{1.1} & \gainp{4.8}{2.9} & \gainp{7.8}{3.9} & \gainp{16.4}{5.2} & \gainp{8.6}{3.2} \\
  \bottomrule
  \end{tabular}
}
\end{table*}

\paragraph{\textcolor{cyan!40!red}{Exploitation}}
From the exploitation perspective, the key question is whether our method, by leveraging SFT, enhances the model’s initial competence and facilitates subsequent RL training.
As illustrated in Figure~\ref{fig:SFT+GRPO-3-5-ep50}, RL training alone may fail to solve many problems (white line), requiring the dynamic intervention of SFT.
To investigate this, we analyze its exclusive solves against the GRPO and LUFFY, building upon the results from the evaluation on MATH-500 with Qwen2.5-Math-7B as the backbone, as shown in Table~\ref{tab:mot-ex-solves}.
The red numbers denote problems that are solved by our method but not by GRPO or LUFFY, i.e., problems newly acquired through our training procedure.
Three clear trends emerge from the analysis:
\begin{itemize}[leftmargin=2em]
    \item First, the red counts consistently increase with problem difficulty, suggesting that \mot improves the model’s ability to tackle more challenging problems.
    \item Second, the green counts within the red boxes remain essentially unchanged across settings: this indicates that, compared with existing methods, \mot preserves performance on problems that the model could already solve, thereby mitigating the risk of catastrophic forgetting.
    \item Finally, the fact that the red counts are consistently large relative to both baselines demonstrates that our method enables the model to acquire a substantial number of problems that prior approaches struggled to solve.
\end{itemize}

\subsection{Training Visualization}
\label{sec:training_visualization}

\begin{figure}[!t]
    \centering
    \includegraphics[width=\linewidth]{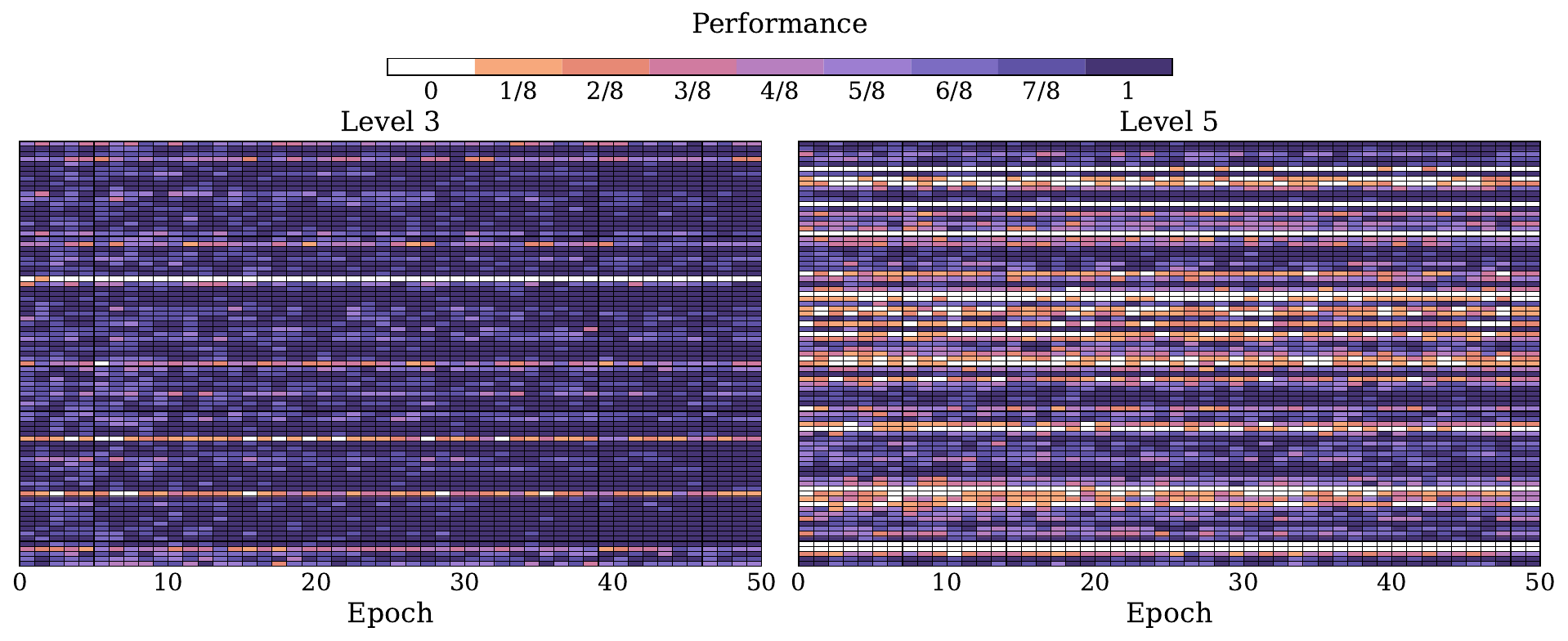}
    \caption{GRPO training dynamics of SFT$\rightarrow$GRPO on Qwen2.5-Math-1.5B across 50 training epochs. We visualize the model's per-question sampling accuracy throughout the training process.}
    \label{fig:SFT+GRPO-3-5-ep50}
    \vspace{-2mm}
\end{figure}

\begin{figure}[!t]
    \centering
    \par \vspace{-1.5mm}
    \includegraphics[width=\linewidth]{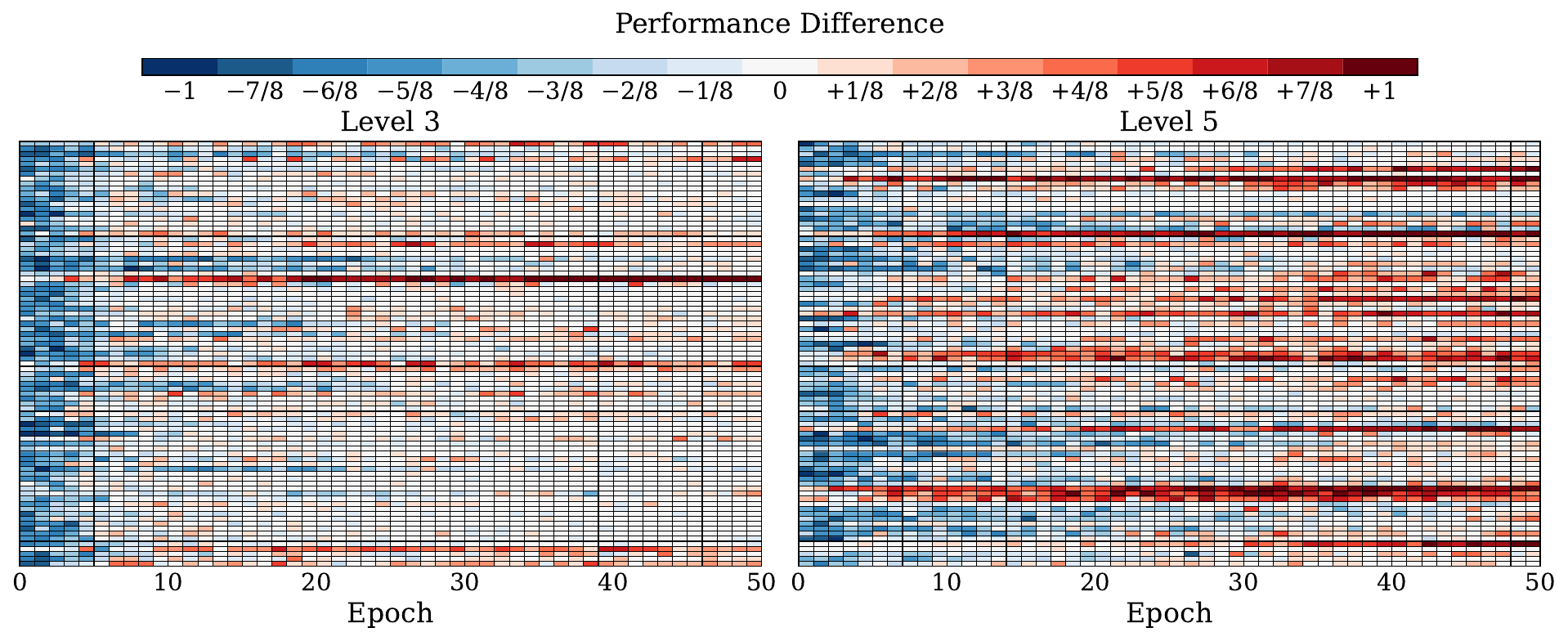}
    \caption{Performance difference (\mot v.s. SFT$\rightarrow$GRPO) on Qwen2.5-Math-1.5B across 50 training epochs.
    A diverging color scale indicates the advantage: red for \mot, blue for SFT$\rightarrow$GRPO, and white for no difference.}
    \label{fig:HPT-SFT+GRPO-3-5-ep50}
\end{figure}

To facilitate a fine-grained examination of the training process and thereby obtain deeper insights into how \mot works, we conduct a visualization analysis comparing the SFT$\rightarrow$GRPO approach with \mot.
We sample 255 problems from the MATH dataset~\citep{hendrycks2021measuring} for subsequent training, with 85 problems each from Levels 3, 4, and 5.
For SFT$\rightarrow$GRPO, we perform 50 epochs of GRPO on a Qwen2.5-Math-1.5B model fine-tuned with SFT, tracking rollout accuracy across training, as shown in Figure~\ref{fig:SFT+GRPO-3-5-ep50}.
We track the rollout accuracy for each sample throughout the entire training process.
To highlight difficulty effects, we focus on Levels 3 and 5 as representative cases.
The left subplot shows Level 3 (easier) problems, and the right shows Level 5 (hardest).
Notably, GRPO frequently produces dense white regions, and sometimes even continuous white lines, reflecting widespread rollout errors across outputs.
This illustrates a core limitation of RL methods: they struggle to learn effectively when frequent rollout errors occur across all outputs.

In parallel, we train Qwen2.5-Math-1.5B from scratch for 50 epochs to visualize \mot.
To compare against SFT$\rightarrow$GRPO and enable a more intuitive comparison, we conduct a differential analysis of the training dynamics.
Specifically, we calculate the accuracy difference at corresponding positions (at matched prompts and steps) in the evaluation grid between two methods: red indicates \mot is better, blue the opposite.
Figure~\ref{fig:HPT-SFT+GRPO-3-5-ep50} presents the results of the difference plots.
Notably, SFT$\rightarrow$GRPO actually requires greater computational resources than \mot: it involves a preceding SFT phase, and our approach also reduces computational costs during the transition from GRPO to SFT, as expensive operations such as rollouts are no longer required.
This unfair comparison leads to an initial dominance of the blue regions, which is expected since the SFT stage in SFT$\rightarrow$GRPO has already incorporated substantial prior knowledge.
However, in the later stages of training, \mot still surpasses and ultimately reveals the dominance of the red regions, indicating that \mot consistently outperforms SFT$\rightarrow$GRPO by substantially enhancing learning performance on the training set.
This advantage becomes even more pronounced in the Level 5 subplot, suggesting that \mot provides particular benefits for learning on more challenging problems, which may be attributed to its use of question-level rollout performance as feedback.

\subsection{Training Dynamics}
\label{sec:training_dynamics}
In this section, we investigate the training dynamics of \mot, focusing on validation performance, entropy, response length, and the offline data ratio.
Our analysis centers on two aspects: whether \mot enables the model to acquire knowledge from offline data when its initial capabilities are limited, and whether its performance can be further enhanced through continued exploration with reinforcement learning.

\begin{figure}[!h]
  \centering
  \includegraphics[width=1.0\textwidth]{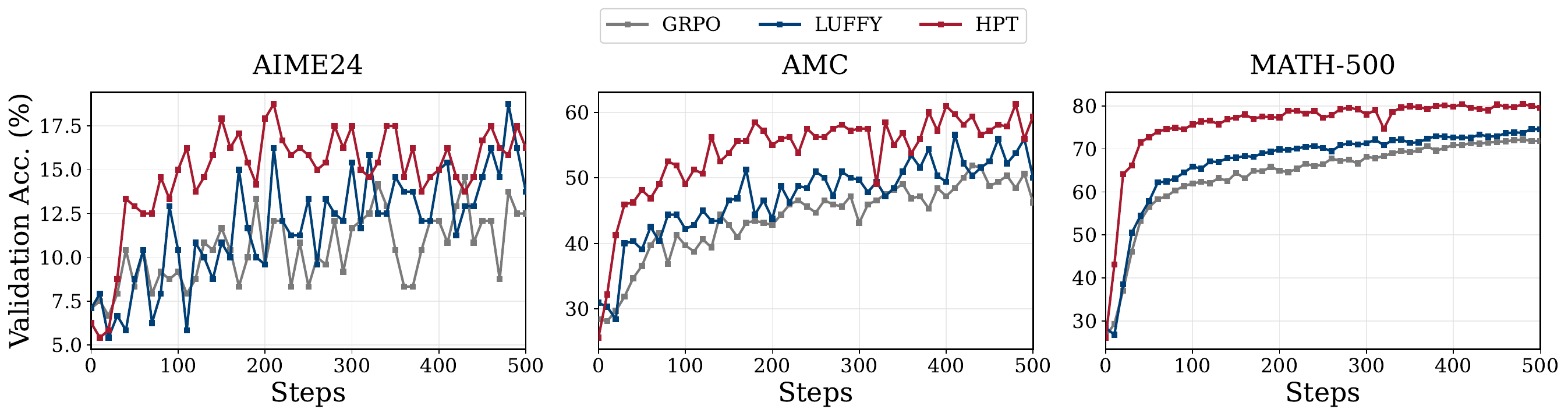}
  \caption{Validation performance comparisons on Qwen2.5-Math-1.5B across benchmarks.}
  \label{fig:val_score_1.5B}
\end{figure}

\paragraph{Validation Performance.}
We track the validation performance on the Qwen2.5-Math-1.5B as shown in Figure~\ref{fig:val_score_1.5B}, where \mot consistently outperforms the baselines and delivers stable improvements across multiple benchmarks.

\begin{wrapfigure}{r}{.5\textwidth}
    \centering
    \includegraphics[width=\linewidth]{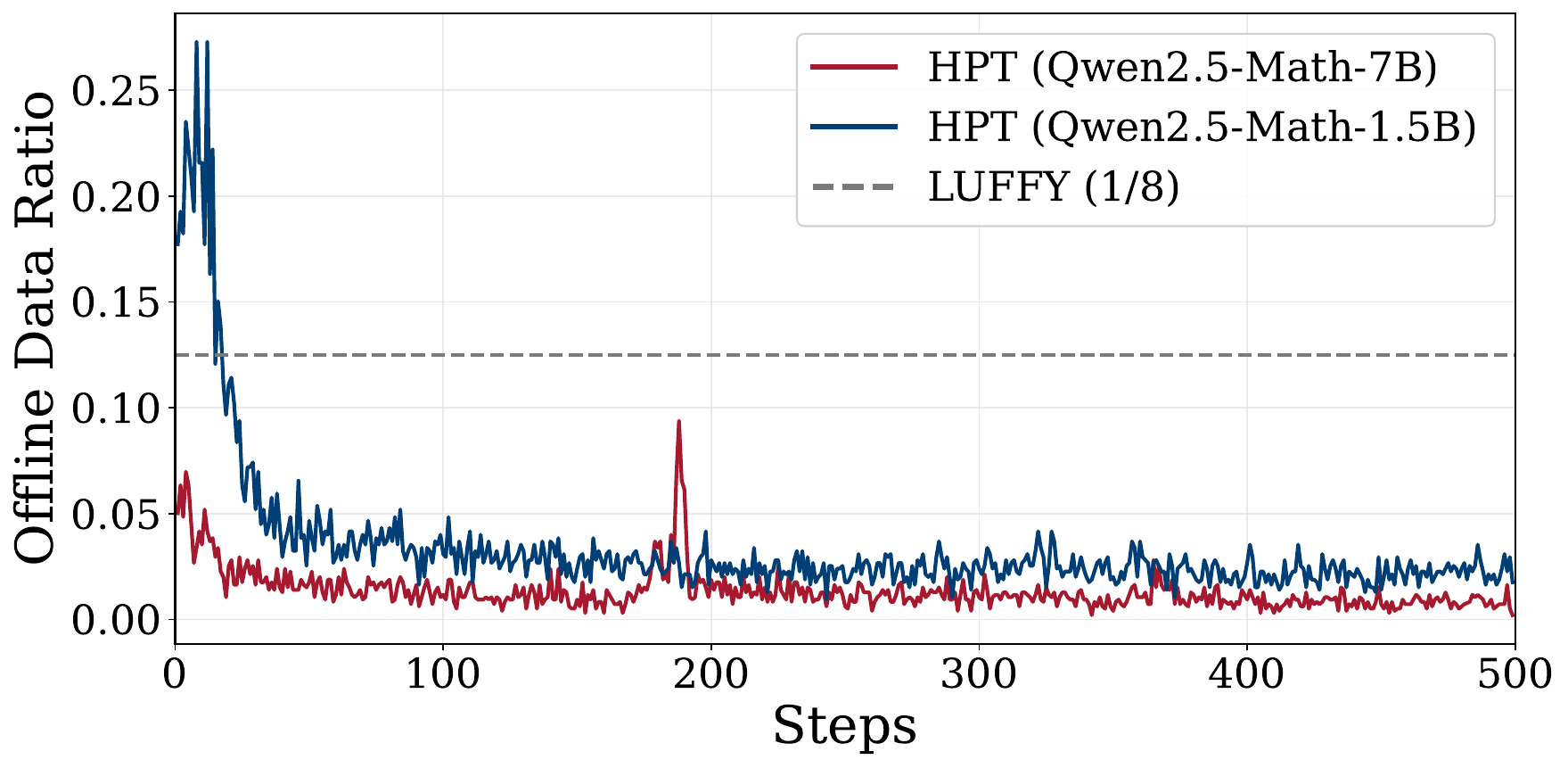}
    \caption{Dynamic offline data ratio dynamics during training.
    The offline data ratio is calculated as the proportion of offline training samples relative to the total training data at each step.
    }
    \label{fig:training_ratio}
\end{wrapfigure}
\paragraph{Offline Data Ratio.}
We begin by quantifying the fraction of prompts whose gradients update the model through the SFT loss versus the RL loss at each training step, as shown in Figure~\ref{fig:training_ratio}.
The offline data ratio is defined as the proportion of offline samples relative to the total number of training samples in each batch, with online samples calculated based on the remaining batch capacity.
As expected, when the model has not yet acquired competence on the target tasks, the early phase is characterized by a large proportion of SFT-driven updates.
As training progresses and the model’s on-policy reward increases, the mixture gradually shifts: the contribution of RL grows while that of SFT diminishes, eventually stabilizing at a small but non-zero level.
This trend is observed for both Qwen2.5-Math-7B and Qwen2.5-Math-1.5B.
The weaker 1.5B model remains in the SFT-dominated regime for a longer period before transitioning, whereas the stronger 7B model shifts earlier.
These results align with our technical analysis of the design of~\mot, where the mixing ratio is automatically adjusted based on performance rather than fixed in advance like LUFFY.

\begin{figure}[!h]
  \centering
  \includegraphics[width=1.0\textwidth]{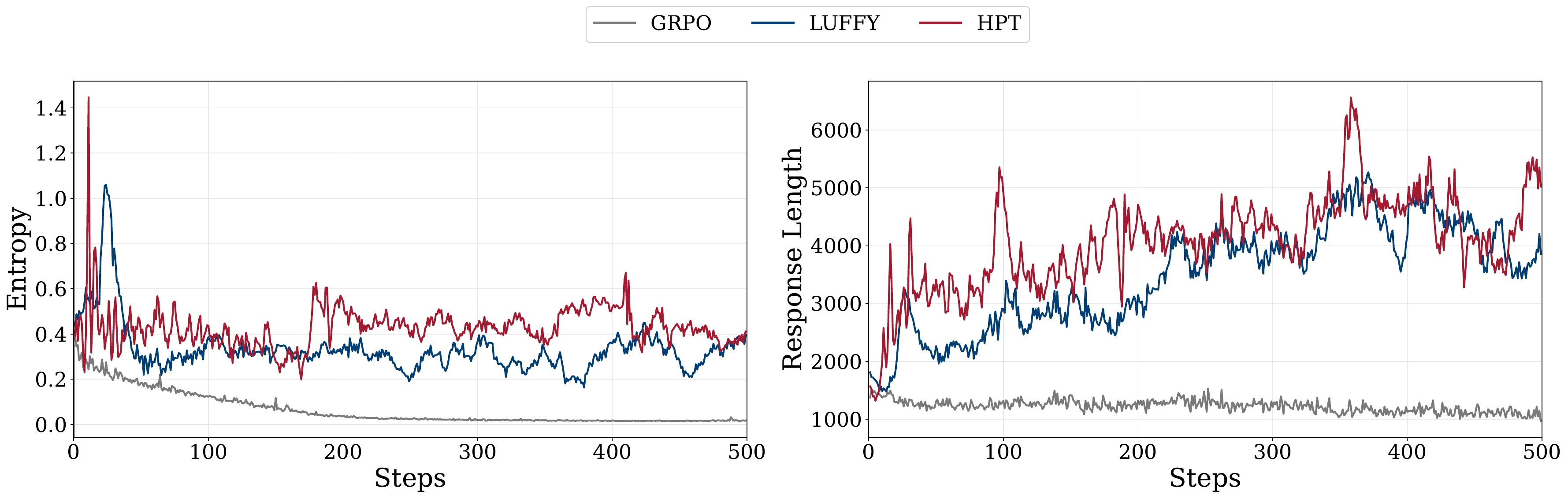}
  \caption{Comparisons of training dynamics across different methods: (left) The entropy measures the diversity of model outputs, indicating exploration behavior; (right) The response length tracks the average length of generated responses.}
  \label{fig:training_metrics_7B}
\end{figure}
\paragraph{Entropy and exploration.}
Figure~\ref{fig:training_metrics_7B} (left) tracks token-level entropy over 500 steps. \mot maintains higher entropy than GRPO throughout the training phases.
This is expected as the offline SFT trajectories are derived from the external demonstration distribution, which consequently increases the diversity in the model’s outputs.

\paragraph{Response length and acquired reasoning patterns.}
Figure~\ref{fig:training_metrics_7B} (right) reports the average response length. Our offline SFT trajectories have a length of up to 8k tokens. Under \mot, the model’s response length increases quickly during the early steps but does not jump to the 8k ceiling.
More importantly, after the method shifts toward RL and the SFT proportion plateaus at a low level, the response length does not regress. This persistence suggests that the model has internalized long-form reasoning routines from the offline data rather than merely echoing teacher outputs. In other words, the learned reasoning pattern becomes part of the policy, and RL fine-tuning refines it instead of erasing it.

\subsection{Impact of Off-policy RL}
\label{sec:impact_of_off-policy_rl}

\begin{table*}[!h]
\definecolor{bluishyellow}{rgb}{0.84, 0.92, 0.85}
\definecolor{bluishcyan}{rgb}{0.74, 0.93, 0.95}
\centering
\caption{Performance of different training paradigms to evaluate the impact of Off-policy RL.
\textbf{SFT/ON} denotes SFT/On-policy~(\mot), 
\textbf{OFF/ON} denotes Off-policy/On-policy,
and \textbf{Mix/ON} denotes Mix-policy/On-policy.}
\label{tab:offpolicy_rl_impact}
\resizebox{.85\textwidth}{!}{%
\begin{tabular}{lccccccc}
\toprule
\textbf{Name} & \textbf{AIME 24} & \textbf{AIME 25} & \textbf{AMC} & \textbf{MATH-500} & \textbf{Minerva} & \textbf{Olympiad} & \textbf{Avg} \\
\midrule
\multicolumn{1}{r}{OFF/ON}                                     
  & $16.6$ & $11.8$ & $47.3$ & $76.2$ & $35.3$ & $41.6$ & $38.1$ \\
\multicolumn{1}{r}{Mix/ON}                                  
  & $\mathbf{16.7}$ & $17.2$ & $46.9$ & $79.4$ & $37.5$ & $43.9$ & $40.3$ \\
\multicolumn{1}{r}{SFT/ON}
  & $16.6$ & $\mathbf{17.8}$ & $\mathbf{51.0}$ & $\mathbf{81.0}$ & $\mathbf{37.5}$ & $\mathbf{47.3}$ & $\mathbf{41.9}$ \\
\bottomrule
\end{tabular}%
}
\end{table*}

In our work, we have only made preliminary attempts at unifying post-training by integrating RL with SFT.
However, off-policy RL represents an important training paradigm that emphasizes leveraging offline data. To this end, we further conduct experiments to investigate its influence and potential role.

We compare three different training paradigms:
(1) SFT/On-policy, the model alternates between SFT and on-policy RL, which corresponds to the method we introduced above (\mot);
(2) Off-policy/On-policy, the model alternates between off-policy RL and on-policy RL during training;
and (3) Mix-policy/On-policy, the model combines the loss from SFT and off-policy RL, and dynamically switches it with the on-policy RL objective. For the Mix setting, we performed hyperparameter search and found the optimal SFT/OFF weighting ratio to be $1/10$, i.e., the coefficients of the SFT loss and the off-policy loss are set to $0.1$ and $1.0$, respectively.
We replicate the off-policy RL implementation described in LUFFY~\citep{yan2025learning}, and all experiments are conducted in the same settings to ensure fairness.

We evaluate the results of three methods on six math benchmarks. Table \ref{tab:offpolicy_rl_impact} presents results.
Overall, the SFT/ON method achieves the best average performance (41.9), outperforming both Mix/ON (40.3) and OFF/ON (38.1).
This suggests that off-policy RL may not be essential, as SFT already serves effectively as the training method of \mot for learning from offline data.

\subsection{Gate Threshold Ablation}
\label{sec:gate_threshold_ablation}

\begin{figure}[!h]
    \centering
    \includegraphics[width=\linewidth]{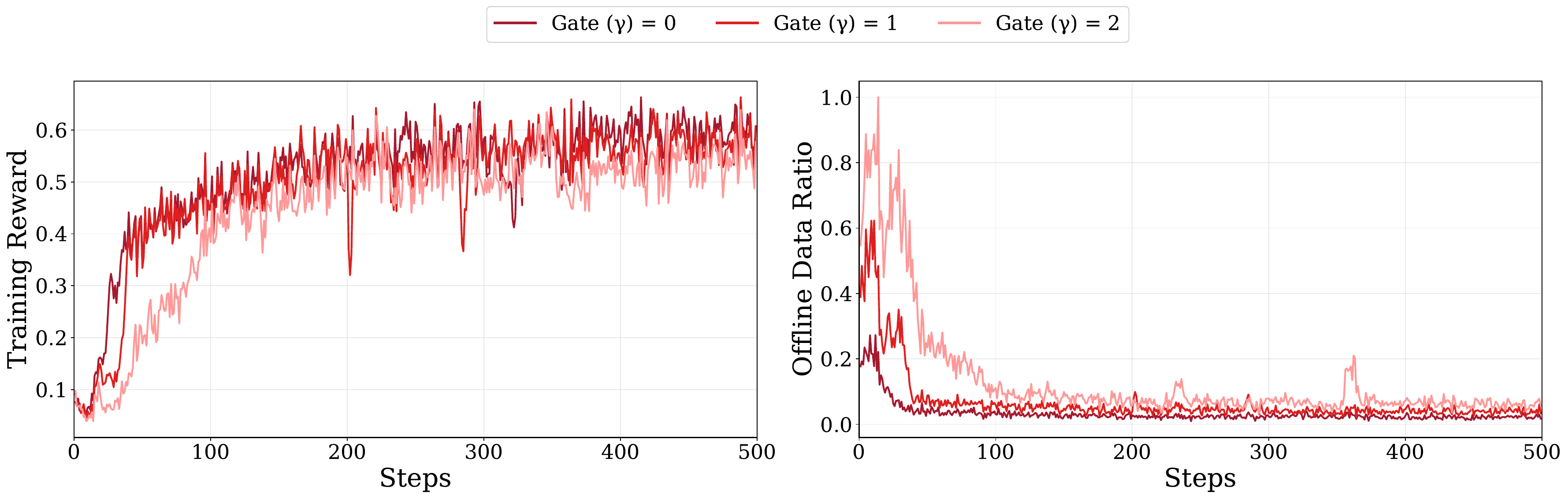}
    \caption{
    Training reward (left) and offline data ratio (right) comparisons across different gate settings on Qwen2.5-Math-1.5B.
    }
    \label{fig:gate_training_reward}
\end{figure}

In this section, we investigate the effect of different gate thresholds $\gamma$. A value of $\gamma=0$ indicates that the model switches to SFT only when it fails all questions.
Similarly, $\gamma=1/8$ and $\gamma=2/8$ correspond to settings where the model remains in on-policy reinforcement learning as long as it answers at least one or two out of eight questions correctly, respectively.
To visualize the impact of the gating mechanism, we conduct experiments on the Qwen2.5-Math-1.5B under three different gate settings.
As shown in Figure~\ref{fig:gate_training_reward}, we analyze the training dynamics by tracking the dynamics of rewards and the proportion of offline data utilized throughout training, thereby highlighting how different gate thresholds mediate the balance between leveraging offline demonstrations and incorporating online feedback.
We observe that, under different gate thresholds, varying degrees of engagement with offline data–based SFT learning emerge.
A larger gate threshold introduces a greater extent of SFT based on offline data, as expected.

\begin{table*}[!ht]
\definecolor{bluishyellow}{rgb}{0.84, 0.92, 0.85}
\definecolor{bluishcyan}{rgb}{0.74, 0.93, 0.95}
\centering
\caption{Performance of \mot with different switch gate $\gamma$ on Qwen2.5-Math-1.5B.}
\label{tab:gate_results}
\resizebox{.85\textwidth}{!}{%
\begin{tabular}{lccccccc}
\toprule
\textbf{Name} & \textbf{AIME 24} & \textbf{AIME 25} & \textbf{AMC} & \textbf{MATH-500} & \textbf{Minerva} & \textbf{Olympiad} & \textbf{Avg} \\
\midrule
\multicolumn{1}{r}{$\gamma=2/8$} & $15.8$ & $13.0$ & $49.0$ & $77.6$ & $34.6$ & $44.1$ & $39.0$ \\
\multicolumn{1}{r}{$\gamma=1/8$} & $\mathbf{18.1}$ & $14.2$ & $46.0$ & $75.4$ & $35.7$ & $42.5$ & $38.7$ \\
\multicolumn{1}{r}{$\gamma=0$} & $16.6$ & $\mathbf{17.8}$ & $\mathbf{51.0}$ & $\mathbf{81.0}$ & $\mathbf{37.5}$ & $\mathbf{47.3}$ & $\mathbf{41.9}$ \\
\bottomrule
\end{tabular}%
}
\end{table*}

To further compare the performance across different gating strategies, we evaluate the three trained models on six benchmarks.
Table~\ref{tab:gate_results} presents the results.
Among the three configurations, $\gamma=0$ achieves the best overall performance with an average score of 41.9, outperforming both $\gamma=1/8$ (38.7) and $\gamma=2/8$ (39.0).
This observation suggests that simply incorporating more SFT does not necessarily lead to better outcomes.
Instead, it is crucial to maintain a dynamic balance between the \textcolor{cyan!40!black}{exploration} of RL and the \textcolor{cyan!40!red}{exploitation} of SFT.
The optimal degree of this gating mechanism should be adjusted according to the characteristics of the base model and the specific training data employed.

\section{Conclusion}

In this paper, we introduce the Unified Policy Gradient Estimator to provide a theoretical framework for LLM post-training. We demonstrate that SFT and RL optimize a common objective, with their respective gradients representing different bias-variance tradeoffs. Motivated by this unified perspective, we propose Hybrid Post-Training (HPT), an algorithm that dynamically adapts between SFT for exploitation and RL for exploration based on real-time performance feedback. Extensive empirical validation shows that HPT consistently outperforms strong baselines, including sequential and static mixed-policy methods, across various models and benchmarks. Our work contributes both a unifying theoretical perspective on post-training and a practical algorithm that effectively balances exploitation and exploration to enhance model capabilities.

\bibliography{colm2025_conference}
\bibliographystyle{colm2025_conference}

\newpage
\appendix

\section{Gradient Derivation for Classical Algorithms} \label{sec:appdendix_grad_cal}
\subsection{Gradient of SFT}

We first consider the SFT process as a warm-up. 
As mentioned in the previous section, SFT takes a pre-trained foundation model and further makes the model more specialized by training its output prediction distribution to align with domain-specific data. 
The fine-tuning process uses the same cross-entropy loss as in model pre-training, defined as follows,

\begin{equation}\label{eq_sft_loss}
    \mathcal{L}_{SFT}(\theta) = - \sum_{i=1}^{N} \sum_{t = 1}^{|\tau_{i}|}\log \pi_{\theta} (\tau_{i,t} |q_i, \tau_{i, <t} ).
\end{equation}
where $\mathcal{D}_{SFT} = \{(q_i, \tau_i)\}_{i \in [N]}$ denotes the SFT dataset consisting of $N$ question and trajectory pairs. $\tau_t$ denotes the $t$-th token in the trajectory and $\tau_{<t}$ denotes all the tokens prior to $\tau_t$.

For any $t$, the LLM outputs the next-token prediction as a probability distribution. In the context of RL, such a probability distribution has been commonly considered as a stochastic policy. 
Then, the gradient calculation of SFT can be obtained by directly taking the derivative of Equation \eqref{eq_sft_loss} and takes the following form:
\begin{equation}\label{eq_SFT_grad}
    \nabla \mathcal{J}_{SFT}(\theta) = -\nabla \mathcal{L}_{SFT}(\theta) = 
     \sum_{i=1}^{N} \sum_{t = 1}^{|\tau_{i}|}
    \nabla \pi_\theta (\tau_{i,t} |q_i, \tau_{i, <t} ) \frac{1}{\pi_\theta (\tau_{i,t} |q_i, \tau_{i, <t} )}.
\end{equation}

In this section, we slightly abuse the notion of policy gradient and consider the SFT as a case of behavioral cloning (BC) \citep{torabi2018behavioral}, and Equation \eqref{eq_SFT_grad} can be seen as a specific form of policy gradient.

\subsection{Gradient of Online RL: PPO, GRPO and Beyond}
For online RL, we first consider Proximal Policy Optimization (PPO) \citep{schulman2017proximal} and a series of its derivations. PPO is a pivotal technique for RLVR in LLMs. Motivated by TRPO, PPO keeps the new policy close to the old policy, and perform conservative policy updates by incorporating a clipped version of its policy ratio in its objective. The clipping function was shown to stabilize the training process and avoid performance collapse during training. In this section, we omit the regularization terms, such as the KL divergence and entropy. The loss objective for PPO can be written as follows,
\begin{equation}\label{eq_PPO_loss}
    \cL_{PPO}(\pi_\theta) = 
    - \frac{1}{N}\sum_{i=1}^N \frac{1}{G}\sum_{j=1}^G \frac{1}{|\tau_j|} \sum_{t=1}^{|\tau_j|} 
        \min (r_{i,j,t}(\theta) \hat{A}_{i,j}, \text{clip}(r_{i,j,t}(\theta), 1-\epsilon, 1+\epsilon)  \hat{A}_{i,j}),
\end{equation}
In this setting, we consider questions sampled from a given dataset $\Dc_{RL} \triangleq \{q_i\}_{i=1}^N$, and for each question, we consider $G$ trajectories independently sampled using a reference policy $\pi_{ref}$.
We use $r_{i,j,t}(\theta) = \frac{\pi_\theta (\tau_{i,j,t} |q_i, \tau_{i,j, <t} ) }{\pi_{ref} (\tau_{i,j,t} |q_i, \tau_{i,j, <t} )}$ to denote the policy ratio $\pi_\theta / \pi_{ref}$ introduced for importance sampling, $\epsilon$ denotes the clipping factor for the importance sampling ratio, enhancing stability.

For PPO, $\hat{A}$ is estimated using the Generalized Advantage Estimation (GAE) \citep{schulman2015high}, calculated based on the reward of the sampled trajectories. For the case of GRPO, the advantage estimate $\hat{A}$ is calculated based on a set of sampled trajectories. Given question $q_i$, a group of sampled roll-out trajectoried $\{\tau_{i,j}\}_{j \in [G]}$ with verifiable reward $R(\tau_{i,j}) \in \{0, 1\}$, $\hat{A}_{i,j}$ is calculated as the normalized reward over the group.

\begin{equation}\label{eq_A_GRPO}
    \hat{A}_{i,j} = \frac{R(\tau_{i,j}) - \text{mean} (\{R(\tau_{i,k})\}_{k \in [G]})}{ \text{std} (\{R(\tau_{i,k})\}_{k \in [G]})},
\end{equation}

Compared to PPO, the most significant difference introduced by GRPO is the group relative advantage described above. Notably, the original manuscript of GRPO has also induced a sequence-level policy gradient balancing and a KL regularization term. However, more recent works such as \citep{yu2025dapo} have removed or modified these terms in general.

The clipped surrogate objective in PPO and similar algorithms enhances the stability of the RL training process by turning off gradient propagation on samples where $\pi_\theta$ moves too far from $\pi_{ref}$. For gradient calculation, this can be represented as an indicator function $\one_{clip}$. 

\begin{equation}\label{eq_PPO_grad}
    \nabla \mathcal{J}_{PPO} = -\nabla \cL_{PPO} =     \frac{1}{N}\sum_{i=1}^N \frac{1}{G}\sum_{j=1}^G \frac{1}{|\tau_j|} \sum_{t=1}^{|\tau_j|} 
    \nabla \pi_\theta (\tau_{i,j,t} |q_i, \tau_{i,j, <t} )  \frac{\hat{A}_{i,j} \one_{clip}}{\pi_{ref} (\tau_{i,j,t} |q_i, \tau_{i,j, <t} )} .
\end{equation}

Apart from PPO and GRPO, many recent RL algorithms for RL post-training in LLMs can be shown to exhibit a similar form for their policy gradient calculations.

\subsection{Gradient of Offline RL}

As stated in the previous sections, many recent studies seek to leverage offline data in the online RL training process for LLMs.
These methods consider expert demonstration data as trajectories sampled from a near-optimal policy, and perform RL updates on these data based on policy gradient updates. These algorithms are adapted from the online RL literature and often combine offline and online training, setting them apart from simple SFT.

Taking SRFT \citep{fu2025srft} as an instance, the offline RL objective can be written as follows

\begin{equation}\label{eq_SRFT_loss}
    \cL_{SRFT}(\pi_\theta) = 
    - \frac{1}{N}\sum_{i=1}^N \frac{1}{G}\sum_{j=1}^G \frac{1}{|\tau_j|} \sum_{t=1}^{|\tau_j|} 
        \pi_\theta (\tau_{i,j,t} |q_i, \tau_{i,j, <t} )  \hat{A}_{i,j},
\end{equation}

This objective is derived from the GRPO objective in Equation \eqref{eq_PPO_loss}, while setting $\pi_{ref} \equiv 1$ and removing the clipping mechanism since it becomes imbalanced.
The motivation behind setting $\pi_{ref} \equiv 1$ is that $\pi_{ref}$ is typically unavailable for offline data. Under the assumption that the demonstration policy evenly covers the current policy $\pi_\theta$. In this case, setting $\pi_{ref}$ to 1 changes the algorithm from importance sampling to rejection sampling.
The policy gradient of the offline SRFT objective can be derived consequently.
\begin{equation}\label{eq_SRFT_grad}
    \nabla \mathcal{J}_{SRFT} = -\nabla \cL_{SRFT} =     \frac{1}{N}\sum_{i=1}^N \frac{1}{G}\sum_{j=1}^G \frac{1}{|\tau_j|} \sum_{t=1}^{|\tau_j|} 
    \nabla \pi_\theta (\tau_{i,j,t} |q_i, \tau_{i,j, <t} )  \frac{\hat{A}_{i,j} }{\pi_{ref} = 1} .
\end{equation}

\section{Additional Theoretical Details for Section~\ref{sec:shared-common-objective}}
\label{app:theory-details}

\newtheorem{propositionA}{Proposition}
\renewcommand{\thepropositionA}{A\arabic{propositionA}}
\newtheorem{theoremA}{Theorem}
\renewcommand{\thetheoremA}{A\arabic{theoremA}}
\newtheorem{lemmaA}{Lemma}
\renewcommand{\thelemmaA}{A\arabic{lemmaA}}
\newtheorem{corollaryA}{Corollary}
\renewcommand{\thecorollaryA}{A\arabic{corollaryA}}
\newtheorem{remarkA}{Remark}
\renewcommand{\theremarkA}{A\arabic{remarkA}}

\subsection{Deriving Equation~\ref{eq:master_grad_pi_measure} from Equation~\ref{eq:master_obj}}
\label{app:derivation}

\begin{lemmaA}[Score-function identity]
\label{lemA:score}
For density $\pi_\theta$ and integrable $f(\tau)$,
\[
\nabla_\theta \,\E_{\tau\sim \pi_\theta}[f(\tau)]
=\E_{\tau\sim \pi_\theta}\!\big[f(\tau)\,\nabla_\theta \log \pi_\theta(\tau)\big],
\qquad
\E_{\tau\sim \pi_\theta}\!\big[\nabla_\theta \log \pi_\theta(\tau)\big]=0.
\]
\end{lemmaA}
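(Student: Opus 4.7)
The plan is to establish the two claims in sequence by invoking the log-derivative (REINFORCE) trick, which is the standard route for a score-function identity. First I would write the expectation as an integral against the density $\pi_\theta$, namely $\E_{\tau\sim\pi_\theta}[f(\tau)]=\int f(\tau)\,\pi_\theta(\tau)\,d\tau$, and then exchange the gradient with the integral. This interchange is the one technical point: it requires a dominated-convergence-type hypothesis (for instance, that $\pi_\theta$ is $C^1$ in $\theta$, that the support of $\pi_\theta$ does not depend on $\theta$, and that $|f(\tau)\,\nabla_\theta \pi_\theta(\tau)|$ admits an integrable envelope locally in $\theta$). I would state these regularity conditions up front as standing assumptions, since they hold for the softmax parameterizations used throughout the paper.

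Once the swap is justified, I would apply the pointwise identity $\nabla_\theta \pi_\theta(\tau)=\pi_\theta(\tau)\,\nabla_\theta \log \pi_\theta(\tau)$ on the set where $\pi_\theta(\tau)>0$, which rewrites the integrand as $f(\tau)\,\pi_\theta(\tau)\,\nabla_\theta \log \pi_\theta(\tau)$. Reinterpreting the resulting integral as an expectation under $\pi_\theta$ yields the first equality:
\begin{equation*}
\nabla_\theta \E_{\tau\sim\pi_\theta}[f(\tau)] \;=\; \E_{\tau\sim\pi_\theta}\!\big[f(\tau)\,\nabla_\theta \log \pi_\theta(\tau)\big].
\end{equation*}

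For the second claim, I would specialize $f\equiv 1$. Then the left-hand side becomes $\nabla_\theta \E_{\tau\sim\pi_\theta}[1]=\nabla_\theta 1=0$, since $\pi_\theta$ integrates to one for every $\theta$. The right-hand side equals $\E_{\tau\sim\pi_\theta}[\nabla_\theta \log \pi_\theta(\tau)]$, giving the desired vanishing expectation. This argument is self-contained and requires no further machinery.

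The only real obstacle is the differentiation-under-the-integral step; for continuous trajectory spaces one should verify the envelope hypothesis carefully, while for the discrete (finite token-vocabulary) setting relevant to language models the expectation reduces to a finite sum and the exchange is trivial. I would briefly remark on both cases to keep the lemma broadly applicable, and note that the same regularity will be implicitly used in the subsequent measure-change step that produces Equation~\ref{eq:master_grad_pi_measure} from Equation~\ref{eq:master_obj}.
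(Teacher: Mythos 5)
Your proof is correct and is the standard log-derivative argument; note that the paper itself states Lemma~A1 without proof (the \emph{proof} block following the three lemmas establishes Equation~\ref{eq:master_grad_pi_measure} from Equation~\ref{eq:master_obj}, invoking Lemma~A1 as a given), so your write-up simply supplies the argument the paper takes for granted. Your identification of the interchange of $\nabla_\theta$ with the integral as the one genuine technical point, and your observation that the second identity follows by taking $f\equiv 1$ together with $\int \pi_\theta = 1$, are exactly right and add appropriate rigor that the paper omits.
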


\begin{lemmaA}[Differentiating an expectation with parameterized integrand]
\label{lemA:param-int}
For differentiable $f_\theta$,
\[
\nabla_\theta \,\E_{\tau\sim \pi_\theta}[f_\theta(\tau)]
= \E_{\tau\sim \pi_\theta}\!\big[\nabla_\theta \log \pi_\theta(\tau)\,f_\theta(\tau)+\nabla_\theta f_\theta(\tau)\big].
\]
\end{lemmaA}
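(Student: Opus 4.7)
The plan is to expand the expectation into an integral (or sum) over trajectories, pass the derivative under the integral sign, apply the product rule to $\pi_\theta(\tau) f_\theta(\tau)$, then invoke the log-derivative trick to recover the score-function form. First I would write
\[
\E_{\tau\sim\pi_\theta}[f_\theta(\tau)] \;=\; \int \pi_\theta(\tau)\, f_\theta(\tau)\, d\tau,
\]
and assume the standard regularity conditions (an integrable dominating envelope for $\nabla_\theta[\pi_\theta f_\theta]$) that justify exchanging $\nabla_\theta$ with the integral. In the LLM trajectory setting this is essentially automatic: the sample space is discrete (finite sequences over a finite vocabulary), so the ``integral'' is a finite sum and differentiation passes through termwise.

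Next I would differentiate under the integral and apply the product rule to the integrand,
\[
\nabla_\theta\!\big[\pi_\theta(\tau)\, f_\theta(\tau)\big] \;=\; \big[\nabla_\theta \pi_\theta(\tau)\big]\, f_\theta(\tau) \;+\; \pi_\theta(\tau)\, \nabla_\theta f_\theta(\tau).
\]
The key step is then the log-derivative identity $\nabla_\theta \pi_\theta(\tau) = \pi_\theta(\tau)\, \nabla_\theta \log \pi_\theta(\tau)$, valid on the support of $\pi_\theta$ and the same identity underlying Lemma~\ref{lemA:score}. Substituting it and factoring $\pi_\theta(\tau)$ yields
\[
\nabla_\theta\!\big[\pi_\theta(\tau)\, f_\theta(\tau)\big] \;=\; \pi_\theta(\tau)\Big[\nabla_\theta \log \pi_\theta(\tau)\, f_\theta(\tau) + \nabla_\theta f_\theta(\tau)\Big].
\]

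Reintegrating both sides and recognizing the right-hand side as a $\pi_\theta$-expectation delivers the claimed formula. As a sanity check, specializing to $f_\theta \equiv f$ independent of $\theta$ kills the second term and recovers Lemma~\ref{lemA:score}; taking $f_\theta \equiv 1$ recovers the mean-zero property of the score. The only real obstacle is the interchange of $\nabla_\theta$ and $\int$, which is a mild measure-theoretic point handled by dominated convergence under standard smoothness assumptions on $\pi_\theta$ and $f_\theta$, and which is free in the discrete trajectory case relevant here.
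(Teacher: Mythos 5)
Your proof is correct: the paper states Lemma~\ref{lemA:param-int} without proof (treating it as a standard fact), and your argument --- differentiating under the integral, applying the product rule to $\pi_\theta f_\theta$, and invoking the log-derivative identity $\nabla_\theta \pi_\theta = \pi_\theta \nabla_\theta \log\pi_\theta$ --- is exactly the standard derivation the paper implicitly relies on. Your sanity checks (recovering Lemma~\ref{lemA:score} when $f_\theta$ is $\theta$-independent, and the mean-zero score when $f_\theta\equiv 1$) and your remark that the interchange of derivative and sum is immediate in the discrete trajectory setting are both apt.
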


\begin{lemmaA}[Measure-change (importance reweighting) identity]
\label{lemA:measure}
Let $s(\tau\mid q)$ be any sampling density that is positive wherever $\pi_\theta(\tau\mid q)$ is. Then
\[
\E_{\tau\sim \pi_\theta}\!\big[f(\tau)\,\nabla_\theta \log \pi_\theta(\tau)\big]
=
\E_{\tau\sim s}\!\Big[\frac{\pi_\theta(\tau)}{s(\tau)}\,f(\tau)\,\nabla_\theta \log \pi_\theta(\tau)\Big]
=
\E_{\tau\sim s}\!\Big[\frac{1}{s(\tau)}\,f(\tau)\,\nabla_\theta \pi_\theta(\tau)\Big].
\]
\end{lemmaA}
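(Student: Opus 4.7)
The plan is to establish the two equalities in turn using only the positivity hypothesis $s(\tau\mid q)>0$ wherever $\pi_\theta(\tau\mid q)>0$ and the elementary identity $\nabla_\theta \log \pi_\theta = \nabla_\theta \pi_\theta/\pi_\theta$. Both equalities are standard consequences of Radon--Nikodym change of variables, so the proof is essentially a one-line manipulation once the integrands are written out; I want to be explicit about where each assumption enters.

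First I would expand the leftmost expectation as an integral against $\pi_\theta$, $\int f(\tau)\,\nabla_\theta \log \pi_\theta(\tau)\,\pi_\theta(\tau\mid q)\,d\tau$, and use the positivity of $s$ on the support of $\pi_\theta$ to multiply and divide the integrand by $s(\tau\mid q)$. Reading the factor $s(\tau\mid q)\,d\tau$ as the base measure converts the integral into $\E_{\tau\sim s}\!\big[\frac{\pi_\theta(\tau\mid q)}{s(\tau\mid q)} f(\tau)\,\nabla_\theta \log \pi_\theta(\tau)\big]$, which is the middle expression. The absolute continuity $\pi_\theta \ll s$ required here is precisely what the positivity hypothesis gives us, and no new assumption is needed.

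Second, for the final equality, I would substitute $\nabla_\theta \log \pi_\theta(\tau) = \nabla_\theta \pi_\theta(\tau)/\pi_\theta(\tau)$ inside the middle expression. The factor $\pi_\theta(\tau\mid q)$ in the importance weight cancels pointwise against the $\pi_\theta(\tau)$ in the denominator of the score, leaving $\frac{1}{s(\tau\mid q)}\,f(\tau)\,\nabla_\theta \pi_\theta(\tau)$ inside the expectation under $s$. The cancellation is valid at every $\tau$ with $\pi_\theta(\tau)>0$; on the complementary set (of $s$-measure zero or where $\pi_\theta$ vanishes) both integrands are zero provided $\nabla_\theta \pi_\theta$ vanishes wherever $\pi_\theta$ does, which holds under the usual smoothness assumed in Lemmas~\ref{lemA:score}--\ref{lemA:param-int}.

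The main (and only mild) obstacle is the bookkeeping around measure-theoretic preconditions rather than any combinatorial or algebraic difficulty: I need to verify integrability of $f(\tau)\,\nabla_\theta \pi_\theta(\tau)/s(\tau\mid q)$ under $s$ so that both sides are finite, and ensure that the pointwise cancellation does not ignore a set where $\nabla_\theta \pi_\theta \neq 0$ while $\pi_\theta = 0$. Both are standard consequences of the dominated-convergence-type regularity already invoked to justify Lemmas~\ref{lemA:score} and~\ref{lemA:param-int}, so beyond citing those hypotheses no further ideas are required and the lemma follows immediately.
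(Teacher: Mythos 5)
Your proof is correct and is exactly the standard change-of-measure argument this lemma rests on: multiply and divide the $\pi_\theta$-integral by $s$ to get the importance-weighted form, then cancel $\pi_\theta$ against the denominator of the score via $\nabla_\theta\log\pi_\theta=\nabla_\theta\pi_\theta/\pi_\theta$, with the positivity hypothesis guaranteeing the division is legitimate on the support of $\pi_\theta$. The paper itself states Lemma~\ref{lemA:measure} without proof (the proof environment that follows it actually establishes the gradient decomposition of the common objective, not this identity), so your write-up simply supplies the routine justification the paper takes for granted; your extra care about the set where $\pi_\theta=0$ and about integrability is harmless and, for a differentiable nonnegative density, the vanishing of $\nabla_\theta\pi_\theta$ there is automatic.
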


\begin{proof}
By Lemma~\ref{lemA:score},
$\nabla \E_{\pi_\theta}[r(\cdot\mid q)]=\E_{\pi_\theta}[r(\cdot\mid q)\,\nabla \log \pi_\theta]$.
For the data-adherence term, since
$\mathrm{KL}(\pi_\beta\|\pi_\theta)=\E_{\pi_\beta}[\log \pi_\beta-\log \pi_\theta]$ and $\pi_\beta$ does not depend on $\theta$,
we have $-\mu\,\nabla \mathrm{KL}(\pi_\beta\|\pi_\theta)=\mu\,\E_{\pi_\beta}[\nabla \log \pi_\theta]$.
Summing yields the claim.
\end{proof}

\subsection{Extension: Adding a Trust-Region Regularizer}
\label{app:lambda-extension}
A trust region encourages conservative policy updates by penalizing the KL divergence from the current policy $\pi_\theta$ to a fixed reference policy $\pi_{ref}$:
\[
\lambda\,\mathrm{KL}\big(\pi_\theta(\cdot\mid q)\,\|\,\pi_{ref}(\cdot\mid q)\big),\qquad \lambda\ge 0.
\]
It is the penalty form of the constrained problem
\[
\max_\theta\ \E_{\tau\sim \pi_\theta}[r(\tau\mid q)]
\quad \text{s.t.}\quad \mathrm{KL}\big(\pi_\theta\|\pi_{ref}\big)\le \delta,
\]
where $\lambda$ acts as the Lagrange multiplier tied to the trust-region radius $\delta$.
Typical choices are $\pi_{ref}=\pi_{\theta_{old}}$ (on-policy stability, TRPO/PPO-style). This penalty controls step sizes, dampens distribution shift, and yields clipping-style masks when optimized with PPO surrogates.

\medskip
\noindent\textbf{Objective and gradient with trust region.}
Augmenting the Common Objective with the trust-region term gives
\[
\widetilde{\mathcal{J}}_{\lambda,\mu}(\theta)
=
\E_{\tau\sim \pi_\theta(\cdot\mid q)}[r(\tau\mid q)]
\;-\;
\lambda\,\mathrm{KL}\!\big(\pi_\theta(\cdot\mid q)\,\|\,\pi_{ref}(\cdot\mid q)\big)
\;-\;
\mu\,\mathrm{KL}\!\big(\pi_\beta(\cdot\mid q)\,\|\,\pi_\theta(\cdot\mid q)\big),
\]
whose gradient is
\[
\nabla_\theta \widetilde{\mathcal{J}}_{\lambda,\mu}(\theta)
=
\E_{\tau\sim \pi_\theta}\!\Big[\big(r(\tau\mid q)-\lambda \log\tfrac{\pi_\theta(\tau\mid q)}{\pi_{ref}(\tau\mid q)}\big)\,\nabla_\theta \log \pi_\theta(\tau\mid q)\Big]
\;+\;
\mu\,\E_{\tau\sim \pi_\beta}\!\big[\nabla_\theta \log \pi_\theta(\tau\mid q)\big].
\]
In the estimator \eqref{eq:upge_form}, this corresponds to replacing the unified advantage by
\[
\widehat{A}_{uni}^{(\lambda)}(\tau,q)
=
r(\tau\mid q)
\;-\;\lambda \log \frac{\pi_\theta(\tau\mid q)}{\pi_{ref}(\tau\mid q)}
\;+\;
\mu\ \frac{\pi_\beta(\tau\mid q)}{\pi_\theta(\tau\mid q)}.
\]
All other expressions, including the masked estimator in \eqref{eq:upge_masked}, remain unchanged in form (with $\widehat{A}_{uni}$ replaced by $\widehat{A}_{uni}^{(\lambda)}$).

\subsection{PPO Clipping and the Stabilization Mask}
\label{app:ppo-mask-subsec}

With rollout policy $\pi_{\theta_{old}}$ and trust-region constraint $\mathrm{KL}(\pi_\theta\|\pi_{\theta_{old}})\le \delta$, the PPO surrogate
\[
\max_{\theta}\;\;
\E_{\tau\sim \pi_{\theta_{old}}}\Big[\min\big(r_\theta(\tau)\,A_{\theta_{old}}(\tau),\ \mathrm{clip}(r_\theta(\tau),1-\epsilon,1+\epsilon)\,A_{\theta_{old}}(\tau)\big)\Big],
\quad
r_\theta(\tau)=\frac{\pi_\theta(\tau)}{\pi_{\theta_{old}}(\tau)},
\]
has a piecewise derivative that is zero outside the trusted region in the harmful direction, yielding
\begin{equation}
\label{eq:ppo_mask_app}
\nabla_\theta
\approx
\E_{\tau\sim \pi_{\theta_{old}}}
\!\left[
\mathbb{1}_{stable}(\tau)\,
\frac{1}{\pi_{\theta_{old}}(\tau)}\,
A_{\theta_{old}}(\tau)\,
\nabla_\theta \pi_\theta(\tau)
\right],
\end{equation}
which matches the masked Unified Policy Gradient Estimator with $\pi_{ref}=\pi_{\theta_{old}}$ and $\widehat{A}=A_{\theta_{old}}$.

\end{document}